\newtheorem{theorem}{Theorem}
\newtheorem{lemma}[theorem]{Lemma}
\newtheorem{definition}[theorem]{Definition}
\newtheorem{remark}[theorem]{Remark}
\newtheorem{assumption}[theorem]{Assumption}
\newtheorem{proposition}[theorem]{Proposition}
\newcommand{\argmin}{\mathop{\mathrm{argmin}}}
\newcommand{\argmax}{\mathop{\mathrm{argmax}}}
\newcommand{\printfnsymbol}[1]{%
  \textsuperscript{\@fnsymbol{#1}}%
}
\def\R{\mathbb{R}}
\def\E{\mathbb{E}}
\def\1{\mathbbm{1}}
\def\half{\frac{1}{2}}
\def\cF{\mathcal{F}}
\def\cH{\mathcal{H}}
\def\cW{\mathcal{W}}
\def\cX{\mathcal{X}}
\def\cY{\mathcal{Y}}
\title{No-Regret Linear Bandits beyond Realizability}
\author[1]{Chong Liu}
\author[1]{Ming Yin}
\author[1]{Yu-Xiang Wang}
\affil[1]{
    Department of Computer Science\\
    University of California\\
    Santa Barbara, CA 93106, USA
}
\begin{document}

\maketitle

\begin{abstract}
We study linear bandits when the underlying reward function is \emph{not} linear. Existing work relies on a uniform misspecification parameter $\epsilon$ that measures the sup-norm error of the best linear approximation. This results in an unavoidable linear regret whenever $\epsilon > 0$. We describe a more natural model of misspecification which only requires the approximation error at each input $x$ to be proportional to the suboptimality gap at $x$.  It captures the intuition that, for optimization problems, near-optimal regions should matter more and we can tolerate larger approximation errors in suboptimal regions. Quite surprisingly, we show that the classical LinUCB algorithm --- designed for the realizable case --- is automatically robust against such gap-adjusted misspecification.  It achieves a near-optimal $\sqrt{T}$ regret for problems that the best-known regret is almost linear in time horizon $T$. Technically, our proof relies on a novel self-bounding argument that bounds the part of the regret due to misspecification by the regret itself. 

\end{abstract}

\section{Introduction}\label{sec:intro}
Stochastic linear bandit is a classical problem of online learning and decision-making with many influential applications, e.g., A/B testing \citep{claeys2021dynamic}, recommendation systems \citep{chu2011contextual}, advertisement placements \citep{wang2021hybrid}, clinical trials \citep{moradipari2020stage}, hyperparameter tuning \citep{alieva2021robust}, and new material discovery \citep{katz2020empirical}.

More formally, stochastic bandit is a sequential game between an agent who chooses a sequence of actions $x_0,...,x_{T-1}\in\cX$ and nature who decides on a sequence of noisy observations (rewards) $y_0,...,y_{T-1}$ according to $y_t = f_0(x_t) + \textit{noise}$ for some underlying function $f_0$.  The goal of the learner is to minimize the \emph{cumulative regret} the agent experiences relative to an oracle who knows the best action to choose ahead of time, i.e.,
$$
R_T(x_0,...,x_{T-1}) = \sum_{t=0}^{T-1} r_t = \sum_{t=0}^{T-1} \max_{x\in \cX} f_0(x) - f_0(x_t),
$$
where $r_t$ is called \emph{instantaneous regret}.

Despite being highly successful in the wild, existing theory for stochastic linear bandits (or more generally learning-oracle based bandits problems \citep{foster2018practical,foster2020beyond}) relies on a \emph{realizability} assumption, i.e., the learner is given access to a function class $\cF$ such that the true expected reward $f_0: \cX\rightarrow \R$ satisfies that $f_0\in\cF$. Realizability is considered one of the strongest and most restrictive assumptions in the standard statistical learning setting, but in the linear bandits, 
all known attempts to deviate from the realizability assumption result in a regret that grows linearly with $T$ \citep{ghosh2017misspecified,lattimore2020learning,zanette2020learning,neu2020efficient,bogunovic2021misspecified,krishnamurthy2021tractable}. 
 
In practical applications, it is often observed that feature-based representation of the actions with function approximations in
estimating the reward can result in very strong policies even if the estimated reward functions are far from
being correct \citep{foster2018practical}. 

So what went wrong? The critical intuition we rely on is the following:
\begin{quote}
It should be sufficient for the estimated reward function to clearly \emph{differentiate} good actions from bad ones, rather than requiring it to perfectly estimate the rewards numerically.
\end{quote}

\textbf{Contributions.} 
In this paper, we formalize this intuition by defining a new family of misspecified bandit problems based on a condition that adjusts the need for an accurate approximation pointwise at every $x\in\cX$ according to the suboptimality gap at $x$. Unlike the existing misspecified linear bandits problems with a linear regret, our problem admits a nearly optimal $\tilde{O}(\sqrt{T})$ regret despite being heavily misspecified. Specifically: 
\begin{itemize}
 \item We define $\rho$-\emph{gap-adjusted misspecified} ($\rho$-GAM) function approximations and characterize how they preserve important properties of the true function that are relevant for optimization.
 \item We show that the classical LinUCB algorithm \citep{abbasi2011improved} can be used \emph{as is} (up to some mild hyperparameters) to achieve an $\tilde{O}(\sqrt{T})$ regret under a moderate level of gap-adjusted misspecification ($\rho \leq O(1/\sqrt{\log T})$). In comparison, the regret bound one can obtain under the corresponding uniform misspecification setting is only $\tilde{O}(T/\sqrt{\log T})$. This represents an exponential improvement in the average regret metric $R_T/T$.
\end{itemize}

To the best of our knowledge, the suboptimality gap-adjusted misspecification problem was not studied before and we are the first to obtain $\sqrt{T}$-style regrets without a realizability assumption.

\textbf{Technical novelty.} Due to misspecification, we have technical challenges that appear in bounding the instantaneous regret and parameter uncertainty region. We tackle the challenges by a self-bounding trick, i.e., bounding the instantaneous regret by the instantaneous regret itself, which can be of independent interest in more settings, e.g., Gaussian process bandit optimization and reinforcement learning.

\section{Related Work}\label{sec:rw}

The problem of linear bandits was first introduced in \citet{abe1999associative}. Then \citet{auer2002finite} proposed the upper confidence bound to study linear bandits where the number of actions is finite. Based on it, \citet{dani2008stochastic} proposed an algorithm based on confidence ellipsoids and then \citet{abbasi2011improved} simplified the proof with a novel self-normalized martingale bound. Later \citet{chu2011contextual} proposed a simpler and more robust linear bandit algorithm and showed $\tilde{O}(\sqrt{d T})$ regret cannot be improved beyond a polylog factor. \citet{li2019nearly} further improved the regret upper and lower bound, which characterized the minimax regret up to an iterated logarithmic factor. See \citet{lattimore2020bandit} for a detailed survey of linear bandits. 

In terms of misspecification, \citet{ghosh2017misspecified} first studied the misspecified linear bandit with a fixed action set. They found that LinUCB \citep{abbasi2011improved} is not robust when misspecification is large. They showed that in a favourable case when one can test the linearity of the reward function, their RLB algorithm is able to switch between the linear bandit algorithm and finite-armed bandit algorithm to address misspecification issue and achieve the $\tilde{O}(\min \{\sqrt{K},d\}\sqrt{T})$ regret where $K$ is number of arms.

The most studied setting of model misspecification is uniform misspecification where the $\ell_\infty$ distance between the best-in-class function and the true function is always upper bounded by some parameter $\epsilon$, i.e.,
\begin{definition}[$\epsilon$-uniform misspecification]
    We say function class $\cF$ is an $\epsilon$-uniform misspecified approximation of $f_0$ if there exists $f\in \cF$ such that $\sup_{x\in\cX}|f(x) - f_0(x)| \leq \epsilon$.
\end{definition}
Under this definition, \citet{lattimore2020learning} proposed the optimal design-based phased elimination algorithm for misspecified linear bandits and achieved $\tilde{O}(d\sqrt{T} + \epsilon \sqrt{d} T)$ regret when number of actions is infinite. They also found that with modified confidence band in LinUCB, LinUCB is able to achieve the same regret. With the same misspecification model, \citet{foster2020beyond} studied contextual bandit with regression oracle, \citet{neu2020efficient} studied multi-armed linear contextual bandit, and \citet{zanette2020learning} studied misspecified contextual linear bandits after reduction of the algorithm. All of their results suffer from linear regrets. Later \citet{bogunovic2021misspecified} studied misspecified Gaussian process bandit optimization problem and achieved $\tilde{O}(d\sqrt{T} + \epsilon \sqrt{d} T )$ regret when linear kernel is used in Gaussian process. Moreover, their lower bound shows that $\tilde{\Omega}(\epsilon T)$ term is unavoidable in this setting. 

Besides uniform misspecification, there are some work considering different definitions of misspecification. \citet{krishnamurthy2021tractable} defines misspecification error as an expected squared error between true function and best-in-class function where expectation is taken over distribution of context space and action space. \citet{foster2020adapting} considered average misspecification, which is weaker than uniform misspecification and allows tighter regret bound. However, they also have linear regrets. 
Our work is different from all related work mentioned above because we are working under a newly defined misspecifiation condition and show that LinUCB is a no-regret algorithm in this case.

Model misspecification is naturally addressed in the related \emph{agnostic} contextual bandits setting \citep{agarwal2014taming}, but these approaches typically require the action space to be finite, thus not directly applicable to our problem. In addition, empirical evidence \citep{foster2018practical} suggests that the regression oracle approach works better in practice than the agnostic approach even if realizability cannot be verified.

\begin{figure*}[t]
	\centering    
	\subfigure[$\rho$-gap-adjusted misspecification]{\label{fig:example}\includegraphics[width=0.45\linewidth]{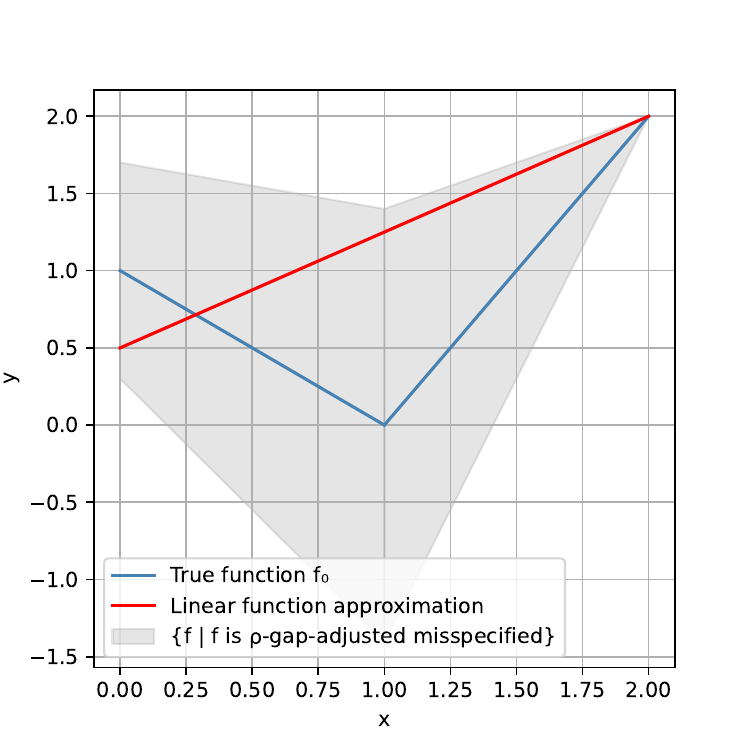}}
	\subfigure[Weak $\rho$-gap-adjusted misspecification]{\label{fig:example2}\includegraphics[width=0.45\linewidth]{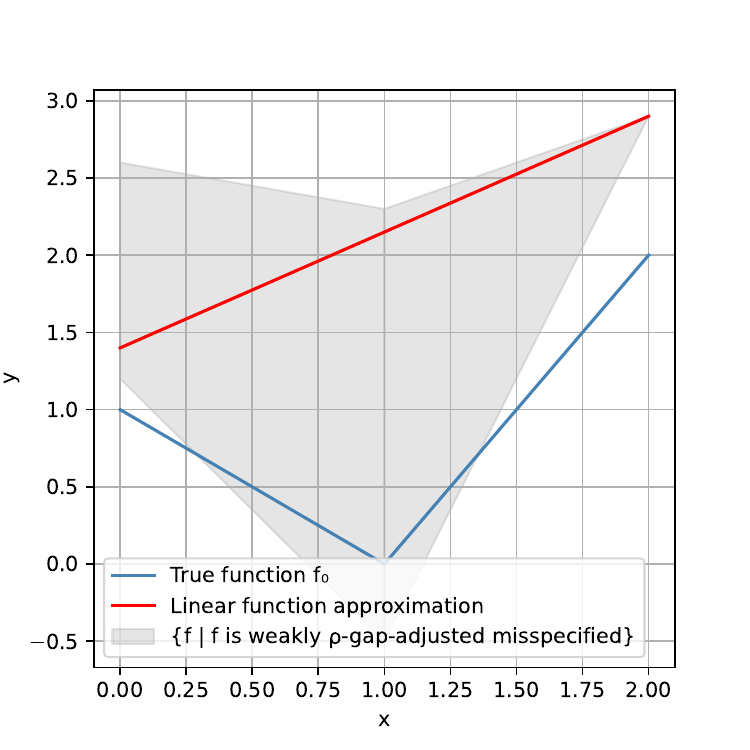}}
	\caption{(a): An example of $\rho$-gap-adjusted misspecification (Definition \ref{def:lm}) in $1$-dimension where $\rho=0.7$. The blue line shows a non-linear true function and the gray region shows the gap-adjusted misspecified function class. Note the vertical range of gray region at a certain point $x$ depends on the suboptimal gap. For example, at $x=1$ suboptimal gap is $2$ and the vertical range is $4\rho=2.8$. The red line shows a feasible linear function that is able to optimize the true function by taking $x_*=2$.  (b): An example of weak $\rho$-gap-adjusted misspecification (Definition \ref{def:lm_weak}) in $1$-dimension where $\rho=0.7$. The difference to Figure \ref{fig:example} is that one can shift the qualifying approximation arbitrarily up or down and the specified model only has to $\rho$-RAM approximate $f_0$ up to an additive constant factor.}
	\label{fig:main}
\end{figure*}

\section{Preliminaries}\label{sec:pre}

\subsection{Notations}\label{sec:notation}
Let $[n]$ denote the integer set $\{1,2,...,n\}$. The algorithm runs in $T$ rounds in total. Let $f_0$ denote the true function, so the maximum function value is defined as $f^* = \max_{x \in \cX} f_0(x)$ and the maximum point is defined as $x^* = \argmax_{x \in \cX} f_0(x)$. Let $\cX \subset \R^d$ and $\cY \subset \R$ denote the domain and range of $f_0$. We use $\cW$ to denote the parameter class of a family of linear functions $\cF := \{f_w: \cX \rightarrow \cY|w \in \cW\}$ where $f_w(x)=w^\top x$. Define $w_*$ as the parameter of best linear approximation function. $\|w\|_2 \leq C_w, \forall w \in \cW$ and $\|x\|_2 \leq C_b, \forall x \in \cX$.
For a vector $x$, its $\ell_2$ norm is denoted by $\|x\|_2 = \sqrt{\sum_{i=1}^d x^2_i}$ and for a matrix $A$ its operator norm is denoted by $\|A\|_\mathrm{op}$. For a vector $x$ and a square matrix $A$, define $\|x\|^2_A = x^\top A x$.

\subsection{Problem Setup}\label{sec:setup}
We consider the following optimization problem:
\begin{align*}
x_* = \argmax_{x \in \cX} f_0(x),
\end{align*}
where $f_0$ is the true function which might not be linear in $\cX$. We want to use a linear function $f_w=w^\top x\in\mathcal{F}$ to approximate $f_0$ and maximize $f_0$. At time $0\leq t \leq T-1$, after querying a data point $x_t$, we will receive a noisy feedback:
\begin{align}
y_t = f_0(x_t) + \eta_t, \label{eq:obs}
\end{align}
where $\eta_t$ is independent, zero-mean, and $\sigma$-sub-Gaussian noise.

The major highlight of our study is that we do not rely on the popular \emph{realizability} assumption (\emph{i.e.} $f_0\in\mathcal{F}$) that is frequently assumed in the existing function approximation literature. Alternatively, we propose the following gap-adjusted misspecification condition.

\begin{definition}[$\rho$-gap-adjusted misspecification]\label{def:lm}
We say a function $f$ is a $\rho$-gap-adjusted misspecified (or $\rho$-GAM in short) approximation of $f_0$ if 
for parameter $0 \leq \rho < 1$,
\begin{align*}
\sup_{x \in \cX} \left| \frac{f(x) - f_0(x)}{f^* - f_0(x)}\right|\leq \rho.\label{eq:local}
\end{align*}
We say function class $\cF=\{f_w | w\in\cW\}$ satisfies $\rho$-GAM for $f_0$, if 
there exists $w^*\in\cW$ such that $f_{w_*}$ is a $\rho$-GAM approximation of $f_0$.
\end{definition}
Observe that when $\rho = 0$, this recovers the standard realizability assumption, but when $\rho>0$ it could cover many misspecified function classes.

Figure~\ref{fig:example} shows a 1-dimensional example with $f_w(x)= 0.75x+0.5$ and piece-wise linear function $f_0(x)$ that satisfies local misspecification. With Definition~\ref{def:lm}, we have the following proposition. 

\begin{proposition}\label{prop:perservation}
Let $f$ be 
a $\rho$-GAM approximation of $f_0$ (Definition~\ref{def:lm}). Then it holds:
\begin{itemize}
\item (Preservation of maximizers) $$\argmax_{x}f(x) =\argmax_{x}f_{0}(x).$$
\item  (Preservation of max value) $$\max_{x\in\mathcal{X}}f(x)=f^*.$$
\item (Self-bounding property) $$|f(x) - f_0(x)| \leq \rho (f^* - f_0(x)) = \rho r(x).$$
\end{itemize}

\end{proposition}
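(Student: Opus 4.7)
My plan is to prove the three bullets in reverse order, since the self-bounding inequality will serve as the workhorse for the other two. For the third bullet, the proof is just a rearrangement of the defining inequality of $\rho$-GAM: because $f^*=\max_{x\in\cX}f_0(x)\geq f_0(x)$ for every $x$, the denominator $f^*-f_0(x)$ is nonnegative, so $|f^*-f_0(x)|=f^*-f_0(x)$ and multiplying the bound in Definition~\ref{def:lm} through by this quantity immediately gives $|f(x)-f_0(x)|\leq \rho(f^*-f_0(x))=\rho r(x)$.

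Next, to establish preservation of the maximum value I would use two one-sided bounds. For the lower bound, I would instantiate the self-bounding inequality at any $x^*\in\argmax_x f_0(x)$: the right-hand side vanishes, forcing $f(x^*)=f_0(x^*)=f^*$, and hence $\max_{x\in\cX}f(x)\geq f^*$. For the upper bound, I would drop the absolute value on the left of the self-bounding inequality to obtain $f(x)-f_0(x)\leq \rho(f^*-f_0(x))$, rearrange as $f(x)\leq (1-\rho)f_0(x)+\rho f^*$, and then use $f_0(x)\leq f^*$ together with $\rho<1$ (so $1-\rho>0$) to conclude $f(x)\leq f^*$ uniformly. Combining the two bounds gives $\max_{x\in\cX}f(x)=f^*$.

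Finally, for preservation of maximizers I would argue set equality. The computation above already shows every $x^*\in\argmax f_0$ satisfies $f(x^*)=f^*=\max_x f(x)$, giving $\argmax f_0\subseteq\argmax f$. For the reverse inclusion, take any $x$ with $f_0(x)<f^*$; then $(1-\rho)f_0(x)+\rho f^*<(1-\rho)f^*+\rho f^*=f^*$ because $1-\rho>0$, so the upper bound from the previous step yields $f(x)<f^*=\max f$, and $x\notin\argmax f$. The only delicate point in the whole argument, and essentially the reason the hypothesis is stated with a strict inequality, is this last strict inequality: without $\rho<1$ one would still have the self-bounding bound but could no longer guarantee that suboptimal points under $f_0$ remain suboptimal under $f$, so argmax preservation would break.
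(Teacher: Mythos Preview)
Your proof is correct and follows essentially the same approach as the paper: both rely on the equivalent form $|f(x)-f_0(x)|\leq\rho|f^*-f_0(x)|$, instantiate it at $x^*\in\argmax f_0$ to get $f(x^*)=f^*$, and use the rearranged one-sided bound $f(x)\leq(1-\rho)f_0(x)+\rho f^*\leq f^*$ for the upper bound and the strict-inclusion argument. The only cosmetic differences are that the paper proves max-value preservation first (wrapping the upper bound as a contradiction) and states the self-bounding property last, whereas you reverse the order and argue the upper bound directly.
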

This tells $f$ and $f_0$ coincide on the same global maximum points and the same global maxima if Definition \ref{def:lm} is satisfied, while allowing $f$ and $f_0$ to be different (potentially large) at other locations. Therefore, Definition~\ref{def:lm} is a ``local'' assumption that does not require $f$ to be uniformly close to $f_0$ (e.g. the ``uniform'' misspecification assumes $\sup_{x\in\mathcal{X}}|f(x)-f_0(x)|\leq \rho$). Proof of Proposition \ref{prop:perservation} is shown in Appendix \ref{sec:pres}.

In addition, we can modify Definition~\ref{def:lm} with a slightly weaker condition that only requires $\argmax_{x}f(x) =\argmax_{x}f_{0}(x)$
but not necessarily $\max_{x\in\mathcal{X}}f(x)=f^*$.

\begin{definition}[Weak $\rho$-gap-adjusted misspecification]\label{def:lm_weak}
Denote $f_w^*=\max_{x\in\mathcal{X}} f(x)$. Then we say $f$ is (weak) $\rho$-gap-adjusted misspecification approximation of $f_0$ for a parameter $0 \leq \rho < 1$ if:
\begin{align*}
\sup_{x \in \cX} \left| \frac{f(x) - f_w^*+f^*-f_0(x)}{f^* - f_0(x)}\right|\leq \rho.
\end{align*}
\end{definition}
See Figure \ref{fig:example2} for an example satisfying Definition \ref{def:lm_weak}, in which there is a constant gap between $f^*_w$ and $f^*$.
The idea of this weaker assumption is that we can always extend the function class by adding a single offset parameter $c$ w.l.o.g. to learn the constant gap $f^* - f_w^*$. In the linear case, this amounts to homogenizing the feature vector by appending $1$. For this reason, we stick to Definition~\ref{def:lm} and linear function approximation for conciseness and clarity in the main paper. See Appendix \ref{sec:weak} for formal statements and Appendix \ref{sec:weak_regret} for proofs of regret bound of linear bandits under Definition~\ref{def:lm_weak}.

Note that both Definition~\ref{def:lm} and Definition~\ref{def:lm_weak} are defined generically which do not require any assumptions on the parametric form of $f$. While we focus on the linear bandit setting in this paper, this notion can be considered for arbitrary function approximation learning problems.

\subsection{Assumptions}\label{sec:ass}
\begin{assumption}[Boundedness]\label{ass:boundedness}
For any $x\in\cX$, $\|x\|_2\leq C_b$.  For any $w\in\cW$, $\|w\|_2\leq C_w$. Moreover, for any $x,\tilde{x}\in\cX$, the true expected reward function $|f_0(x) - f_0(\tilde{x})| \leq F$.
\end{assumption}
These are mild assumptions that we assume for convenience. Relaxations of these are possible but not the focus of this paper. Note that the additional assumption is not required when $f_0$ is realizable.

\begin{assumption}\label{ass:unique}
Suppose $\mathcal{X}\in\R^d$ is a compact set, and all the global maximizers of $f_0$ live on the $d-1$ dimensional hyperplane. i.e., $\exists a\in\R^d,b\in \R^1$, s.t. 
\begin{align*}
\argmax_{x\in\mathcal{X}}f_{0}(x)\subset \{x\in\R^d: x^\top a=b\}.
\end{align*}
\end{assumption}
For instance, when $d=1$, the above reduces to that $f_0$ has a unique maximizer. This is a compatibility assumption for Definition~\ref{def:lm}, since any linear function that violates Assumption~\ref{ass:unique} will not satisfy Definition \ref{def:lm}.

In addition, to obtain an $\tilde{O}(\sqrt{T})$ regret, for any finite sample $T$, we require the following condition.
\begin{assumption}[Low misspecification]\label{ass:rho}
The linear function class is a $\rho$-GAM approximation of $f_0$ with
\begin{align}
\rho < \frac{1}{8 d \sqrt{\log \left(1 + \frac{T C^2_b C^2_w}{d \sigma^2}\right)}} = O\left( \frac{1}{d\sqrt{\log T}}\right).
\end{align}
\end{assumption}
The condition is required for technical reasons. Relaxing this condition for LinUCB may require fundamental breakthroughs that knock out logarithmic factors from its regret analysis. This will be further clarified in the proof.  In general, however, we conjecture that this condition is not needed and there are algorithms that can achieve $\tilde{O}(\sqrt{T}/(1-\rho))$ regret for any $\rho < 1$, but a new algorithm needs to be designed.

While this assumption may suggest that we still require realizability in a truly asymptotic world, handling a $O(1/\sqrt{\log T})$ level of misspecification is highly non-trivial in finite sample setting. For instance, if $T$ is a trillion, $1/\sqrt{\log (1e12)} \approx 0.19$. This means that for most practical cases, LinUCB is able to tolerate a constant level of misspecification under the GAM model.

\subsection{LinUCB Algorithm}\label{sec:alg}
We will focus on analyzing the classical Linear Upper Confidence Bound (LinUCB) algorithm due to \citep{dani2008stochastic,abbasi2011improved}, shown below.
\begin{algorithm}[!htbp]
\caption{LinUCB \citep{abbasi2011improved}}
	\label{alg:linucb}
	{\bf Input:}
	Predefined sequence $\beta_t$ for $t=1,2,3,...$ as in eq. \eqref{eq:beta_t};
 Set $\lambda=\sigma^2/C^2_w$ and $\mathrm{Ball}_0 = \cW$.
	\begin{algorithmic}[1]
	    \FOR{$t = 0,1,2,... $}
	    \STATE Select $x_t=\argmax_{x \in \cX} \max_{w \in \mathrm{Ball}_t} w^\top x$.
	    \STATE Observe $y_t = f_0(x_t) + \eta_t$.
     \STATE Update 
     \begin{align}
\Sigma_{t+1} = \lambda I + \sum_{i=0}^{t} x_i x^\top_i \mathrm{where}\  \Sigma_0 = \lambda I.\label{eq:sigma_t}
\end{align}
\STATE Update 
	    \begin{align}
\hat{w}_{t+1} = \argmin_w \lambda \|w\|^2_2+ \sum_{i=0}^{t} (w^\top x_i - y_i)^2_2.\label{eq:w_t_opt}
\end{align}
    \STATE Update $\mathrm{Ball}_{t+1} = \{w | \|w - \hat{w}_{t+1}\|^2_{\Sigma_{t+1}} \leq \beta_{t+1}\}.$
		\ENDFOR
	\end{algorithmic}
\end{algorithm}

\section{Main Results}\label{sec:theory}

In this section, we show that the classical LinUCB algorithm \citep{abbasi2011improved} works in $\rho$-gap-adjusted misspecified linear bandits and achieves cumulative regret at the order of $\tilde{O}(\sqrt{T}/(1-\rho))$. The following theorem shows the cumulative regret bound.

\begin{theorem}\label{thm:main}
Suppose Assumptions \ref{ass:boundedness}, \ref{ass:unique}, and \ref{ass:rho} hold. Set 
\begin{align}
\beta_t = 8\sigma^2 \left(1 + d\log\left(1+ \frac{t C^2_b C^2_w }{d \sigma^2} \right) + 2\log \left(\frac{\pi^2 t^2}{3\delta} \right)\right).\label{eq:beta_t}
\end{align} 
Then Algorithm \ref{alg:linucb} guarantees w.p. $> 1-\delta$ simultaneously for all $T=1,2,...$
\begin{align*}
R_T &\leq F + \sqrt{\frac{8 (T-1) \beta_{T-1} d}{(1-\rho)^2} \log \left( 1 + \frac{T C^2_b C^2_w }{d \sigma^2 }\right)}.
\end{align*}
\end{theorem}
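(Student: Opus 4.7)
The plan is a self-bounding argument closed by induction on $t$, establishing two intertwined claims simultaneously:
(H1) $w_* \in \mathrm{Ball}_s$ for every $s \leq t$, and
(H2) the instantaneous regret obeys $r_s \leq \frac{2\sqrt{\beta_s}}{1-\rho}\|x_s\|_{\Sigma_s^{-1}}$ for every $s \leq t$.
The first round is handled trivially by Assumption~\ref{ass:boundedness} giving $r_0 \leq F$, which explains the leading $F$ in the stated bound.

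First I would prove the per-step self-bounding regret bound assuming (H1). Because Proposition~\ref{prop:perservation} gives $f_{w_*}(x^*) = f^*$, optimism yields $\tilde{w}_t^\top x_t \geq w_*^\top x^* = f^*$, where $(x_t,\tilde{w}_t) := \argmax_{x\in\cX,\,w\in\mathrm{Ball}_t} w^\top x$. Decomposing
\begin{align*}
r_t \leq \tilde{w}_t^\top x_t - f_0(x_t) = (\tilde{w}_t - w_*)^\top x_t + \bigl(f_{w_*}(x_t) - f_0(x_t)\bigr) \leq 2\sqrt{\beta_t}\,\|x_t\|_{\Sigma_t^{-1}} + \rho\, r_t,
\end{align*}
and rearranging gives (H2). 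Here the first inequality uses $\tilde{w}_t, w_* \in \mathrm{Ball}_t$ and the Cauchy--Schwarz $|v^\top x|\leq \|v\|_{\Sigma_t}\|x\|_{\Sigma_t^{-1}}$, while the second uses the self-bounding property of Proposition~\ref{prop:perservation}. This step goes through cleanly precisely because Proposition~\ref{prop:perservation} preserves the optimism structure of LinUCB despite misspecification.

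The main obstacle is propagating (H1), i.e., controlling the ridge estimator under misspecified observations. Writing $\epsilon_i := f_0(x_i) - f_{w_*}(x_i)$ with $|\epsilon_i| \leq \rho r_i$, standard algebra on the normal equations gives
\begin{align*}
\|\hat{w}_{t+1} - w_*\|_{\Sigma_{t+1}} \leq \Bigl\|\textstyle\sum_{i\leq t} x_i \eta_i\Bigr\|_{\Sigma_{t+1}^{-1}} + \Bigl\|\textstyle\sum_{i\leq t} x_i \epsilon_i\Bigr\|_{\Sigma_{t+1}^{-1}} + \sqrt{\lambda}\,C_w.
\end{align*}
The stochastic term is controlled by the Abbasi-Yadkori self-normalized martingale bound (with a union bound over $t$ accounting for the $\log(\pi^2 t^2/(3\delta))$ piece of $\beta_t$), and the regularization term equals $\sigma$ by the choice $\lambda=\sigma^2/C_w^2$. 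The crucial novelty is the misspecification term. Exploiting $\sum_{i\leq t} x_i x_i^\top \preceq \Sigma_{t+1}$ (which implies the largest eigenvalue of $X\Sigma_{t+1}^{-1}X^\top$ is at most $1$), I would derive the sharp bound $\bigl\|\sum x_i \epsilon_i\bigr\|_{\Sigma_{t+1}^{-1}} \leq \sqrt{\sum \epsilon_i^2} \leq \rho\sqrt{\sum r_i^2}$. Then (H2) applied inductively gives $r_i^2 \leq \tfrac{4\beta_i}{(1-\rho)^2}\|x_i\|_{\Sigma_i^{-1}}^2$, and combining this with the elliptical potential lemma $\sum_i \|x_i\|_{\Sigma_i^{-1}}^2 \leq 2d\log\bigl(1 + \tfrac{TC_b^2C_w^2}{d\sigma^2}\bigr)$ bounds the misspecification contribution by $O\bigl(\tfrac{\rho\sqrt{\beta_t\, d\log T}}{1-\rho}\bigr)$. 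Assumption~\ref{ass:rho} is precisely the calibration that makes this term absorbable into the factor $8$ in $\beta_{t+1}$, closing (H1) at time $t+1$.

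Finally, summing (H2) over $t=1,\ldots,T-1$ and applying Cauchy--Schwarz together with the same elliptical potential gives
\begin{align*}
R_T \leq F + \frac{2\sqrt{\beta_{T-1}}}{1-\rho}\sqrt{(T-1)\cdot 2d\log\Bigl(1 + \tfrac{TC_b^2C_w^2}{d\sigma^2}\Bigr)},
\end{align*}
which matches the theorem after collecting constants inside the square root as $\sqrt{8}$. The hard part — and the core novelty — is that (H1) and (H2) reference each other; it is only the sharper $\sqrt{\sum \epsilon_i^2}$ norm bound (rather than the naive $\sum |\epsilon_i|\|x_i\|_{\Sigma_{t+1}^{-1}}$) that lets the induction close with merely a $\sqrt{\log T}$-factor loss in the admissible misspecification level of Assumption~\ref{ass:rho}.
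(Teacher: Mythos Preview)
Your proposal is correct and follows the same overall route as the paper: an induction that simultaneously maintains feasibility $w_*\in\mathrm{Ball}_t$ and the per-step bound $r_t\leq \tfrac{2\sqrt{\beta_t}}{1-\rho}\|x_t\|_{\Sigma_t^{-1}}$, closed via the self-normalized martingale bound, the elliptical potential lemma, and Assumption~\ref{ass:rho}. Your derivation of (H2) via optimism and Proposition~\ref{prop:perservation} is the same as the paper's Lemmas~\ref{lem:r_t} and~\ref{lem:gap} combined, and your final summation matches the paper's Lemma~\ref{lem:sos_r_t}.

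The one substantive difference is how you control the misspecification term $\bigl\|\sum_i x_i\epsilon_i\bigr\|_{\Sigma_{t}^{-1}}$. The paper bounds it by $\bigl(\sum_i|\epsilon_i|\,\|x_i\|_{\Sigma_t^{-1}}\bigr)$, then applies Cauchy--Schwarz together with the trace bound $\sum_i\|x_i\|_{\Sigma_t^{-1}}^2\leq d$ (Lemma~\ref{lem:sum_pos2}) to obtain $\sqrt{d}\,\rho\sqrt{\sum_i r_i^2}$. Your operator-norm argument, using $X^\top X\preceq\Sigma_t$ so that $X\Sigma_t^{-1}X^\top\preceq I$, gives directly $\rho\sqrt{\sum_i r_i^2}$, which is sharper by a factor $\sqrt{d}$. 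Both bounds close the induction under Assumption~\ref{ass:rho}; your version would in fact permit the weaker condition $\rho=O\bigl(1/\sqrt{d\log T}\bigr)$ rather than $O\bigl(1/(d\sqrt{\log T})\bigr)$. This is a genuine refinement of one lemma, but the architecture of the proof is unchanged.
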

\begin{remark}
The result shows that LinUCB achieves $\tilde{O}(\sqrt{T})$ cumulative regret bound and thus it is a no-regret algorithm in $\rho$-gap-adjusted misspecified linear bandits. In contrast, LinUCB can only achieve $\tilde{O}(\sqrt{T} + \epsilon T)$ regret in uniformly misspecified linear bandits. Even if $\epsilon = \tilde{O}(1/\sqrt{\log T})$, the resulting regret $\tilde{O}(T/\sqrt{\log T})$ is still exponentially worse than ours.
\end{remark}
\begin{proof}
By definition of cumulative regret, function range absolute bound $F$, and Cauchy-Schwarz inequality,
\begin{align*}
R_T &= r_0 + \sum_{t=1}^{T-1} r_t \\
&\leq F + \sqrt{\left(\sum_{t=1}^{T-1} 1 \right) \left(\sum_{t=1}^{T-1} r^2_t \right)}\\
&= F + \sqrt{ (T-1) \sum_{t=1}^{T-1} r^2_t}.
\end{align*}
Observe that the choice of $\beta_t$ is monotonically increasing in $t$. Also by Lemma~\ref{lem:w_t}, we get that with probability $1-\delta$, $w_*\in \text{Ball}_t, \forall t= 1,2,3,...$, which verifies the condition to apply Lemma \ref{lem:sos_r_t} simultaneously for all $T=1,2,3,...$, thereby completing the proof.
\end{proof}

\subsection{Regret Analysis}\label{sec:reg_ana}

The proof follows the LinUCB analysis closely. The main innovation is a self-bounding argument that controls the regret due to misspecification by the regret itself.  This appears in Lemma~\ref{lem:r_t} and then again in the proof of Lemma~\ref{lem:w_t}.

Before we proceed, let $\Delta_t$ denote the deviation term of our linear function from the true function at $x_t$, formally,
\begin{align}
\Delta_t = f_0(x_t) - w^\top_* x_t,\label{eq:delta}
\end{align}
And our observation model (eq. \eqref{eq:obs}) becomes
\begin{align}
y_t = f_0(x_t) + \eta_t = w_*^\top x_t + \Delta_t + \eta_t.\label{eq:obs2}
\end{align}
Moreover, we have the following lemma showing the property of deviation term $\Delta_t$.
\begin{lemma}[Bound of deviation term]\label{lem:delta}
$\forall t \in \{0,1,\ldots,T-1\}$,
\begin{align*}
|\Delta_t | \leq \frac{\rho}{1-\rho} w^\top_*(x_* - x_t).
\end{align*}
\begin{proof}
Recall the definition of deviation term in eq. \eqref{eq:delta}:
\begin{align*}
\Delta_t = f_0(x_t) - w_*^\top x_t.
\end{align*}
By Definition \ref{def:lm}, $\forall t \in \{0,1,\ldots,T-1\}$,
\begin{align*}
-\rho(f^* - f_0(x_t))\leq \Delta_t &\leq \rho(f^* - f_0(x_t))\\
-\rho(f^* - w_*^\top x_t - \Delta_t)\leq \Delta_t &\leq \rho(f^* - w_*^\top x_t - \Delta_t)\\
-\rho(w_*^\top x_* - w_*^\top x_t - \Delta_t)\leq \Delta_t &\leq \rho(w_*^\top x_* - w_*^\top x_t - \Delta_t)\\
\frac{-\rho}{1-\rho} (w_*^\top x_* - w_*^\top x_t)\leq \Delta_t &\leq \frac{\rho}{1 + \rho}(w_*^\top x_* - w_*^\top x_t),
\end{align*}
where the third line is by Proposition \ref{prop:perservation} and the proof completes by taking the absolute value of the lower and upper bounds.
\end{proof}
\end{lemma}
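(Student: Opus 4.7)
My plan is to derive the bound as a direct consequence of Proposition~\ref{prop:perservation}. The self-bounding clause of that proposition says $|f_{w_*}(x_t) - f_0(x_t)| \leq \rho(f^* - f_0(x_t))$, and by the definition $\Delta_t = f_0(x_t) - w_*^\top x_t$ the left-hand side is exactly $|\Delta_t|$. So the starting point is the pair of one-sided inequalities $-\rho(f^* - f_0(x_t)) \leq \Delta_t \leq \rho(f^* - f_0(x_t))$.

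Next I would eliminate $f^*$ and $f_0(x_t)$ from the right-hand side using the other two clauses of Proposition~\ref{prop:perservation}. Preservation of the maximum value gives $f^* = w_*^\top x_*$, while the definition of $\Delta_t$ gives $f_0(x_t) = w_*^\top x_t + \Delta_t$. Substituting these into the previous display produces a pair of inequalities in which $\Delta_t$ appears on both sides, of the form
\begin{align*}
-\rho\bigl(w_*^\top(x_* - x_t) - \Delta_t\bigr) \leq \Delta_t \leq \rho\bigl(w_*^\top(x_* - x_t) - \Delta_t\bigr).
\end{align*}
Note that $w_*^\top(x_* - x_t) \geq 0$, since preservation of maximizers forces $x_*$ to be a maximizer of the linear function $f_{w_*}$ as well, so the right-hand side is a genuine nonnegative quantity minus $\Delta_t$.

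The last step is to isolate $\Delta_t$ in each of the two inequalities separately. The upper inequality yields $\Delta_t(1+\rho) \leq \rho\, w_*^\top(x_* - x_t)$, and the lower inequality yields $-\Delta_t(1-\rho) \leq \rho\, w_*^\top(x_* - x_t)$. The latter gives the larger multiplicative constant when $\rho \in [0,1)$, so it governs the absolute-value bound and produces the claimed $|\Delta_t| \leq \frac{\rho}{1-\rho} w_*^\top(x_* - x_t)$.

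There is no real difficulty here; the only thing to be careful about is recognizing that after substituting $f_0(x_t) = w_*^\top x_t + \Delta_t$, the occurrence of $\Delta_t$ on the right-hand side must be collected with the one on the left before dividing. This is precisely what produces the asymmetric denominators $1\pm\rho$ and makes the negative side (with $1-\rho$) the binding one.
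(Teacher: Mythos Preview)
Your proposal is correct and follows essentially the same route as the paper: start from the $\rho$-GAM bound $|\Delta_t|\leq\rho(f^*-f_0(x_t))$, substitute $f^*=w_*^\top x_*$ and $f_0(x_t)=w_*^\top x_t+\Delta_t$, and solve the resulting two-sided inequality for $\Delta_t$ to obtain the asymmetric $\frac{\rho}{1\pm\rho}$ bounds. The only addition relative to the paper is your explicit justification that $w_*^\top(x_*-x_t)\geq 0$ via preservation of maximizers, which the paper leaves implicit.
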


Next, we prove instantaneous regret bound and its sum of squared regret version in the following two lemmas:

\begin{lemma}[Instantaneous regret bound]\label{lem:r_t}
Define $u_t := \| x_t\|_{\Sigma_t^{-1}}$, assume $w_*\in \mathrm{Ball}_t$
then for each $t\geq 1$
\begin{align*}
r_t \leq \frac{2\sqrt{\beta_t}u_t}{1-\rho}.
\end{align*}
\end{lemma}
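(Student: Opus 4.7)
The plan is to follow the classical optimism-based LinUCB argument, with the twist that the misspecification term $\Delta_t$ is absorbed back into $r_t$ via the self-bounding property of $\rho$-GAM (the third bullet of Proposition~\ref{prop:perservation}). Concretely, I would obtain $r_t \le 2\sqrt{\beta_t}\,u_t + |\Delta_t|$ by optimism, then use $|\Delta_t|\le \rho\, r_t$ to solve for $r_t$.

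\emph{Step 1 (decompose $r_t$).} Using preservation of the max value from Proposition~\ref{prop:perservation}, $f^* = w_*^\top x_*$. Together with the observation model rewrite $f_0(x_t) = w_*^\top x_t + \Delta_t$, the instantaneous regret splits as $r_t = f^* - f_0(x_t) = (w_*^\top x_* - w_*^\top x_t) - \Delta_t$.

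\emph{Step 2 (optimism).} Let $\tilde w_t := \argmax_{w\in\mathrm{Ball}_t} w^\top x_t$. Since $(x_t,\tilde w_t)$ is the joint maximizer in line~2 of Algorithm~\ref{alg:linucb} and $w_*\in\mathrm{Ball}_t$ by hypothesis, $\tilde w_t^\top x_t \ge \max_{w\in\mathrm{Ball}_t} w^\top x_* \ge w_*^\top x_* = f^*$. Therefore $w_*^\top x_* - w_*^\top x_t \le (\tilde w_t - w_*)^\top x_t$. Because both $\tilde w_t$ and $w_*$ belong to $\mathrm{Ball}_t$, the triangle inequality in the $\Sigma_t$-norm gives $\|\tilde w_t - w_*\|_{\Sigma_t} \le 2\sqrt{\beta_t}$. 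Applying the weighted Cauchy--Schwarz inequality $|v^\top x| \le \|v\|_{\Sigma_t}\|x\|_{\Sigma_t^{-1}}$ with $v = \tilde w_t - w_*$ yields $(\tilde w_t - w_*)^\top x_t \le 2\sqrt{\beta_t}\, u_t$. Combining with Step~1, $r_t + \Delta_t \le 2\sqrt{\beta_t}\,u_t$, so $r_t \le 2\sqrt{\beta_t}\,u_t + |\Delta_t|$.

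\emph{Step 3 (self-bounding).} By the third bullet of Proposition~\ref{prop:perservation} applied at $x_t$, $|\Delta_t| = |w_*^\top x_t - f_0(x_t)| \le \rho\,(f^* - f_0(x_t)) = \rho\, r_t$. Substituting into the bound of Step~2 gives $r_t \le 2\sqrt{\beta_t}\, u_t + \rho\, r_t$, and rearranging (using $\rho < 1$) delivers $r_t \le \frac{2\sqrt{\beta_t}\, u_t}{1-\rho}$, as claimed.

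\emph{Main obstacle.} Everything except Step~3 is routine LinUCB material; the essential subtlety is that a naive expansion of $r_t$ produces an uncontrolled additive term $|\Delta_t|$ that, summed over $t$, would lead to linear regret, exactly as in uniform misspecification. The $\rho$-GAM self-bounding inequality $|\Delta_t|\le \rho\, r_t$ is what lets us move $|\Delta_t|$ to the left-hand side and recover an optimism-only bound with merely an inflated $1/(1-\rho)$ factor. (Using Lemma~\ref{lem:delta} instead would also work, after substituting $w_*^\top(x_*-x_t) = r_t + \Delta_t$, but it yields worse constants and requires an extra rearrangement; applying the sharper identity directly is cleaner.)
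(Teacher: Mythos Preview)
Your proposal is correct and follows essentially the same approach as the paper: decompose $r_t$ using $f^*=w_*^\top x_*$, bound $w_*^\top(x_*-x_t)\le 2\sqrt{\beta_t}u_t$ via optimism (the paper packages this as the separate Lemma~\ref{lem:gap}, which you inline in Step~2), and then absorb the misspecification via $|\Delta_t|\le \rho r_t$ to rearrange. The only cosmetic difference is the order---the paper applies the self-bounding step before invoking Lemma~\ref{lem:gap}, whereas you apply optimism first---but the argument is identical in content.
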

\begin{proof}
By definition of instantaneous regret,
\begin{align*}
r_t &= f^* - f_0(x_t)\\
&= w^\top_* x_* - (w^\top_* x_t + \Delta(x_t))\\
&\leq w^\top_* x_* - w^\top_* x_t + \rho (f^* - f_0(x_t))\\
&= w^\top_* x_* - w^\top_* x_t + \rho r_t,
\end{align*}
where the inequality is by Definition \ref{def:lm}. Therefore, by rearranging the inequality we have
\begin{align*}
r_t &\leq \frac{1}{1-\rho}(w^\top_* x_* - w^\top_* x_t) \leq  \frac{2\sqrt{\beta_t} u_t}{1-\rho},
\end{align*}
where the last inequality is by Lemma \ref{lem:gap}.
\end{proof}

\begin{lemma}\label{lem:sos_r_t}
Assume $\beta_t$ is monotonically nondecreasing and $w_*\in \mathrm{Ball}_t$ for all $t=1,...,T-1$, then 
\begin{align*}
    \sum_{t=1}^{T-1} r^2_t \leq \frac{8\beta_{T-1} d}{(1-\rho)^2} \log \left( 1 + \frac{T C^2_b}{d \lambda }\right).
\end{align*}
\end{lemma}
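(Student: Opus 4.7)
The plan is to combine the per-step regret bound from Lemma~\ref{lem:r_t} with the standard elliptical potential argument (a.k.a. the ``log-determinant trick'' from \citet{abbasi2011improved,dani2008stochastic}). Since we are told $w_*\in\mathrm{Ball}_t$ for every $t\ge 1$, we may invoke Lemma~\ref{lem:r_t} verbatim for each such $t$, getting
\begin{align*}
r_t^2 \;\le\; \frac{4\beta_t\, u_t^2}{(1-\rho)^2} \;\le\; \frac{4\beta_{T-1}\, u_t^2}{(1-\rho)^2},
\end{align*}
where the second inequality uses the assumed monotonicity of $\beta_t$. Summing over $t=1,\ldots,T-1$ reduces the claim to showing
\begin{align*}
\sum_{t=1}^{T-1} u_t^2 \;=\; \sum_{t=1}^{T-1} \|x_t\|_{\Sigma_t^{-1}}^2 \;\le\; 2d\log\!\left(1+\tfrac{T C_b^2}{d\lambda}\right).
\end{align*}

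The second step is precisely the elliptical potential lemma applied to the regularized covariances $\Sigma_t$ from Algorithm~\ref{alg:linucb}. The standard identity $\det(\Sigma_{t+1})=\det(\Sigma_t)\bigl(1+\|x_t\|_{\Sigma_t^{-1}}^2\bigr)$ telescopes to $\sum_t \log(1+u_t^2)=\log\det(\Sigma_T)-\log\det(\Sigma_0)$. Combining with the scalar inequality $u \le 2\log(1+u)$ (valid on $u\in[0,1]$) and the AM-GM bound $\det(\Sigma_T)\le \bigl(\tfrac{\mathrm{tr}(\Sigma_T)}{d}\bigr)^d \le \bigl(\lambda+\tfrac{TC_b^2}{d}\bigr)^d$ yields the displayed $\log$-determinant bound, so that a factor of $2$ is lost and the final constant becomes $4\cdot 2=8$, matching the statement.

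The main obstacle is the applicability of $u\le 2\log(1+u)$, which strictly requires $u_t^2\le 1$. With the choice $\lambda=\sigma^2/C_w^2$ made in Algorithm~\ref{alg:linucb}, the crude bound $u_t^2\le \|x_t\|_2^2/\lambda_{\min}(\Sigma_t)\le C_b^2C_w^2/\sigma^2$ does not automatically give $u_t^2\le 1$. The cleanest way to handle this is either (i) to assume the standard normalization $C_b^2\le \lambda$, so that $u_t^2\le 1$ always, or (ii) to use the slightly sharper two-sided argument $\sum_t \min\{1,u_t^2\}\le 2\log\det(\Sigma_T/\Sigma_0)$ and absorb the clipping into a ``min with $F$'' on the instantaneous regret side (which we already separate out through the additive $F$ term in Theorem~\ref{thm:main}). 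Either reduction gives the bound as stated.

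Concretely, I would write the proof as: apply Lemma~\ref{lem:r_t} and square to get step~(1)--(3) above, pull $\beta_{T-1}$ out of the sum using monotonicity, then cite the elliptical potential lemma (e.g.\ \citet[Lemma~11]{abbasi2011improved}) to conclude. No new ideas relative to the classical LinUCB analysis are needed at this step; the $1/(1-\rho)^2$ inflation is exactly the price paid in Lemma~\ref{lem:r_t} for gap-adjusted misspecification, and it is inherited unchanged here.
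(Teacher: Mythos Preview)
Your proposal is correct and follows essentially the same route as the paper: square Lemma~\ref{lem:r_t}, pull out $\beta_{T-1}$ by monotonicity, and apply the elliptical potential lemma (the paper's Lemma~\ref{lem:sum_pos}) to bound $\sum_t u_t^2$. The only cosmetic difference is that the paper extends the sum to include $t=0$ before invoking Lemma~\ref{lem:sum_pos}, and it handles your $u_t^2\le 1$ concern inside that lemma via a Sherman--Morrison identity rather than by the normalization/clipping workarounds you sketch.
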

\begin{proof}
By definition $u_t = \sqrt{x^\top_t \Sigma^{-1}_{t} x_t}$ and Lemma \ref{lem:r_t},
\begin{align*}
\sum_{t=1}^{T-1} r^2_t &\leq \sum_{t=1}^{T-1} \frac{4}{(1-\rho)^2} \beta_t u^2_t \\
&\leq \frac{4\beta_{T-1}}{(1-\rho)^2} \sum_{t=1}^{T-1} u^2_t \\
&\leq \frac{4\beta_{T-1}}{(1-\rho)^2} \sum_{t=0}^{T-1} u^2_t\\
&\leq \frac{8\beta_{T-1} d}{(1-\rho)^2} \log \left( 1 + \frac{T C^2_b}{d \lambda }\right),
\end{align*}
where the second inequality is by the monotonic increasing property of $\beta_t$ and the last inequality uses the elliptical potential lemma (Lemma \ref{lem:sum_pos}).
\end{proof}

Previous two lemmas hold on the following lemma, bounding the gap between $f^*$ and the linear function value at $x_t$, shown below.

\begin{lemma}\label{lem:gap}
Define $u_t = \| x_t\|_{\Sigma_t^{-1}}$ and assume $\beta_t$ is chosen such that $w_*\in \mathrm{Ball}_t$.
Then
\begin{align*}
w_*^\top (x_* - x_t) \leq 2 \sqrt{\beta_t} u_t.
\end{align*}
\end{lemma}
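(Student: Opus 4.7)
The plan is to use the optimism principle of LinUCB combined with a Cauchy--Schwarz argument in the $\Sigma_t$-norm. First, I would unpack the action selection rule in line 2 of Algorithm~\ref{alg:linucb}: at round $t$, the pair $(x_t,\tilde w_t)$ where $\tilde w_t := \argmax_{w\in \mathrm{Ball}_t} w^\top x_t$ is the joint maximizer of $w^\top x$ over $\cX \times \mathrm{Ball}_t$. Since the hypothesis $w_*\in \mathrm{Ball}_t$ makes $(x_*,w_*)$ a feasible competitor, optimism yields
\begin{equation*}
\tilde w_t^\top x_t \;\geq\; w_*^\top x_*.
\end{equation*}

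Next, I would subtract $w_*^\top x_t$ from both sides to convert the gap we want to bound into a difference of linear predictions at the same point $x_t$:
\begin{equation*}
w_*^\top(x_* - x_t) \;\leq\; \tilde w_t^\top x_t - w_*^\top x_t \;=\; (\tilde w_t - w_*)^\top x_t.
\end{equation*}
Now I apply Cauchy--Schwarz in the geometry induced by $\Sigma_t$, writing $(\tilde w_t - w_*)^\top x_t = \langle \Sigma_t^{1/2}(\tilde w_t - w_*),\, \Sigma_t^{-1/2} x_t\rangle$, which gives
\begin{equation*}
(\tilde w_t - w_*)^\top x_t \;\leq\; \|\tilde w_t - w_*\|_{\Sigma_t}\cdot \|x_t\|_{\Sigma_t^{-1}} \;=\; \|\tilde w_t - w_*\|_{\Sigma_t}\cdot u_t.
\end{equation*}

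Finally, since both $\tilde w_t$ and $w_*$ lie in $\mathrm{Ball}_t = \{w : \|w-\hat w_t\|_{\Sigma_t}\leq \sqrt{\beta_t}\}$, the triangle inequality in the $\Sigma_t$-norm gives $\|\tilde w_t - w_*\|_{\Sigma_t}\leq 2\sqrt{\beta_t}$, completing the bound $w_*^\top(x_* - x_t)\leq 2\sqrt{\beta_t}\, u_t$. There is no real obstacle here: the entire argument is the canonical LinUCB UCB gap lemma, and the only ingredient that has to be verified elsewhere (and is assumed as a hypothesis) is the confidence-set validity $w_*\in\mathrm{Ball}_t$, which will be the substantive work done in Lemma~\ref{lem:w_t} where misspecification actually intervenes through the self-bounding argument on $\Delta_t$.
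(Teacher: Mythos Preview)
Your proof is correct and essentially identical to the paper's: both use optimism of $(x_t,\tilde w_t)$ against the feasible pair $(x_*,w_*)$, then bound $(\tilde w_t - w_*)^\top x_t$ via Cauchy--Schwarz/H\"older in the $\Sigma_t$-norm together with the fact that both $\tilde w_t$ and $w_*$ lie in $\mathrm{Ball}_t$. The only cosmetic difference is that the paper splits $\tilde w - w_* = (\tilde w - \hat w_t) + (\hat w_t - w_*)$ before applying H\"older to each piece, whereas you apply Cauchy--Schwarz first and then the triangle inequality on $\|\tilde w_t - w_*\|_{\Sigma_t}$; the two routes are equivalent.
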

\begin{proof}
Let $\tilde{w}$ denote the parameter that achieves $\argmax_{w \in \mathrm{Ball}_t} w^\top x_t$, by the optimality of $x_t$, 
\begin{align*}
\ w_*^\top x_* - w^\top_* x_t &\leq \tilde{w}^\top x_t - w^\top_* x_t \\
&= (\tilde{w} - \hat{w}_t + \hat{w}_t - w_*)^\top x_t\\
&\leq \|\tilde{w} - \hat{w}_t\|_{\Sigma_t} \|x_t\|_{\Sigma^{-1}_t} \\
&\quad \ + \|\hat{w}_t - w_*\|_{\Sigma_t} \|x_t\|_{\Sigma^{-1}_t}\\
&\leq 2\sqrt{\beta_t} u_t
\end{align*}
where the second inequality applies Holder's inequality; the last line uses the definition of $\mathrm{Ball}_t$ (note that both $w_*,\tilde{w}\in \mathrm{Ball}_t).$
\end{proof} 

\subsection{Confidence Analysis}\label{sec:conf_ana}
All analysis in the previous section requires $w_* \in \mathrm{Ball}_t, \forall t\in [T]$. In this section, we show that our choice of $\beta_t$ in \eqref{eq:beta_t} is valid and $w_*$ is trapped in the uncertainty set $\mathrm{Ball}_t$ with high probability.

\begin{lemma}[Feasibility of $\mathrm{Ball}_t$]\label{lem:w_t}
Suppose Assumptions \ref{ass:boundedness}, \ref{ass:unique}, and \ref{ass:rho} hold. Set $\beta_t$ as in eq. \eqref{eq:beta_t}. Then, w.p. $> 1- \delta$,
\begin{align*}
\|w_* - \hat{w}_t\|^2_{\Sigma_t} \leq \beta_t, \forall t=1,2,...
\end{align*}
\end{lemma}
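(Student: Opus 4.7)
}

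The plan is to proceed by strong induction on $t$, with the trivial base case $\mathrm{Ball}_0 = \cW \ni w_*$. For the inductive step, I would start from the closed-form ridge solution $\hat{w}_t = \Sigma_t^{-1}\sum_{i=0}^{t-1}x_i y_i$ and substitute the misspecified observation model \eqref{eq:obs2}, $y_i = w_*^\top x_i + \Delta_i + \eta_i$. This gives the three-term decomposition
\begin{align*}
\hat{w}_t - w_* \;=\; -\lambda \Sigma_t^{-1} w_* \;+\; \Sigma_t^{-1}\sum_{i=0}^{t-1} x_i \Delta_i \;+\; \Sigma_t^{-1}\sum_{i=0}^{t-1} x_i \eta_i.
\end{align*}
Taking the $\Sigma_t$-norm and applying the triangle inequality reduces the problem to controlling three quantities: the regularization bias, the misspecification bias, and the martingale noise.

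The regularization term is immediate: $\lambda\|\Sigma_t^{-1} w_*\|_{\Sigma_t} = \sqrt{\lambda}\sqrt{w_*^\top \Sigma_t^{-1} w_*}\cdot\sqrt{\lambda} \leq \sqrt{\lambda}\,C_w = \sigma$ by the choice $\lambda=\sigma^2/C_w^2$. The noise term is handled by the Abbasi-Yadkori self-normalized martingale bound applied at confidence level $\delta_t = 3\delta/(\pi^2 t^2)$, so that a union bound via $\sum_{t\geq 1}\delta_t=\delta$ gives a single good event that holds simultaneously for all $t$; this is precisely where the $2\log(\pi^2 t^2/(3\delta))$ term in \eqref{eq:beta_t} comes from.

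The heart of the proof is the misspecification term $\|\sum_{i=0}^{t-1} x_i \Delta_i\|_{\Sigma_t^{-1}}$. I would first establish the clean operator-norm bound
\begin{align*}
\Bigl\|\sum_{i=0}^{t-1} x_i \Delta_i\Bigr\|_{\Sigma_t^{-1}}^2 \;=\; \Delta^\top X \Sigma_t^{-1} X^\top \Delta \;\leq\; \|\Delta\|_2^2\cdot\|X\Sigma_t^{-1}X^\top\|_{\mathrm{op}} \;\leq\; \sum_{i=0}^{t-1} \Delta_i^2,
\end{align*}
where $X$ stacks the $x_i^\top$ and the final inequality uses that $X\Sigma_t^{-1}X^\top$ shares nonzero eigenvalues with $I - \lambda\Sigma_t^{-1}\preceq I$. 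Next, invoking the self-bounding property in Proposition~\ref{prop:perservation} gives $|\Delta_i|\leq \rho\, r_i$, so $\sum \Delta_i^2 \leq \rho^2(F^2 + \sum_{i=1}^{t-1} r_i^2)$. Here is the self-referential twist: the inductive hypothesis says $w_*\in\mathrm{Ball}_s$ for all $s<t$, which by Lemma~\ref{lem:sos_r_t} (applied with horizon $t-1$) yields $\sum_{i=1}^{t-1} r_i^2 \leq \frac{8\beta_{t-1}\,d}{(1-\rho)^2}\log(1+TC_b^2/(d\lambda))$.

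Combining the three bounds with an $(a+b+c)^2\leq 3(a^2+b^2+c^2)$ step produces an inequality of the shape $\|\hat{w}_t-w_*\|_{\Sigma_t}^2 \leq c_1\sigma^2 + c_2\,\text{(noise)}^2 + c_3\,\rho^2 \beta_{t-1} d\log(\cdots)$, where the first two contributions match $\beta_t$ up to the chosen constants and the last is the misspecification residual that must be absorbed. The main obstacle, and the reason for Assumption~\ref{ass:rho}, is exactly this absorption step: we need $c_3\rho^2 d\log(1+TC_b^2 C_w^2/(d\sigma^2))\leq 1/2$ (or some constant less than $1$) so that the misspecification contribution can be moved to the left-hand side, yielding $\|\hat{w}_t-w_*\|_{\Sigma_t}^2\leq \beta_t$ and closing the induction. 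The bound $\rho = O(1/(d\sqrt{\log T}))$ in Assumption~\ref{ass:rho} is calibrated precisely so that $\rho^2 \cdot d\log(\cdot)$ is a small constant, which is the logarithmic factor the remark after Theorem~\ref{thm:main} alludes to.
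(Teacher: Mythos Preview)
Your proposal is correct and follows the same inductive, self-bounding skeleton as the paper: decompose $\hat w_t - w_*$ into regularization, noise, and misspecification pieces; control the noise with the self-normalized martingale bound at level $\delta_t = 3\delta/(\pi^2 t^2)$; bound the misspecification via $|\Delta_i|\leq \rho r_i$ together with the induction hypothesis; and absorb the resulting $\rho^2\beta_{t-1}$-type residual using Assumption~\ref{ass:rho}.

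The one genuine difference is how you handle $\bigl\|\sum_i x_i\Delta_i\bigr\|_{\Sigma_t^{-1}}^2$. The paper first applies Cauchy--Schwarz to obtain $\bigl(\sum_i |\Delta_i|\,\|x_i\|_{\Sigma_t^{-1}}\bigr)^2 \leq \bigl(\sum_i \Delta_i^2\bigr)\bigl(\sum_i \|x_i\|_{\Sigma_t^{-1}}^2\bigr)$ and then invokes the trace bound of Lemma~\ref{lem:sum_pos2}, $\sum_i x_i^\top \Sigma_t^{-1} x_i \leq d$, thereby picking up an extra factor of $d$. You instead use the operator-norm fact $\|X\Sigma_t^{-1}X^\top\|_{\mathrm{op}}\leq 1$ (equivalently, that $X\Sigma_t^{-1}X^\top$ shares its nonzero spectrum with $I-\lambda\Sigma_t^{-1}\preceq I$) to pass directly to $\sum_i\Delta_i^2$. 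After feeding through Lemma~\ref{lem:sos_r_t}, your residual is of order $\rho^2 d\,\beta_{t-1}\log(\cdot)$, whereas the paper's is $\rho^2 d^2\,\beta_{t-1}\log(\cdot)$. Both are absorbed under the stated Assumption~\ref{ass:rho}, but your argument would in fact tolerate the weaker condition $\rho = O\bigl(1/\sqrt{d\log T}\bigr)$ rather than $O\bigl(1/(d\sqrt{\log T})\bigr)$. The paper's route has the minor expository benefit of isolating Lemma~\ref{lem:sum_pos2} as a reusable fact; for this particular proof, your operator-norm step is both cleaner and quantitatively sharper.
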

\begin{proof}
By setting the gradient of objective function in eq. \eqref{eq:w_t_opt} to be $0$, we obtain the closed form solution of eq. \eqref{eq:w_t_opt}:
\begin{align*}
\hat{w}_t = \Sigma_t^{-1} \sum_{i=0}^{t-1} y_i x_i.
\end{align*}
Therefore,
\begin{align}
\hat{w}_t - w_* &= - w_* + \Sigma_t^{-1} \sum_{i=0}^{t-1} x_i y_i \nonumber\\
&= - w_* + \Sigma_t^{-1} \sum_{i=0}^{t-1} x_i (x_i^\top w_* + \eta_i + \Delta_i) \nonumber\\
&= -w_* + \Sigma^{-1}_t \left(\sum_{i=0}^{t-1} x_i x_i^\top \right) w_* + \Sigma^{-1}_t \sum_{i=0}^{t-1} \eta_i x_i  \nonumber \\
&\quad \ + \Sigma^{-1}_t \sum_{i=0}^{t-1} \Delta_i x_i,\label{eq:w_t_1}
\end{align}
where the second equation is by eq. \eqref{eq:obs2} and the first two terms of eq. \eqref{eq:w_t_1} can be further simplified as
\begin{align*}
&\quad \ -w_* + \Sigma^{-1}_t \left(\sum_{i=0}^{t-1} x_i x_i^\top \right) w_* \\
&= -w_* + \Sigma^{-1}_t \left(\lambda I + \sum_{i=0}^{t-1} x_i x_i^\top - \lambda I \right) w_*\\
&= - w_* + \Sigma_t^{-1} \Sigma_t w_* - \lambda \Sigma_t^{-1} w_*\\
& = - \lambda \Sigma^{-1}_t w_*,
\end{align*}
where the second equation is by definition of $\Sigma_t$ (eq. \eqref{eq:sigma_t}). Therefore, eq. \eqref{eq:w_t_1} can be rewritten as
\begin{align*}
\hat{w}_t - w_* = - \lambda \Sigma^{-1}_t w_*  + \Sigma^{-1}_t \sum_{i=0}^{t-1} \eta_i x_i  + \Sigma^{-1}_t \sum_{i=0}^{t-1} \Delta_i x_i.
\end{align*}
Multiply both sides by $\Sigma_t^{\half}$ and we have
\begin{align*}
\Sigma_t^{\half}(\hat{w}_t - w_*) &= - \lambda \Sigma^{-\half}_t w_* + \Sigma_t^{-\half} \sum_{i=0}^{t-1} \eta_i x_i \\
&\quad \ + \Sigma^{-\half}_t \sum_{i=0}^{t-1} \Delta_i x_i.
\end{align*}
Take a square of both sides and apply generalized triangle inequality, we have
\begin{align}
\|\hat{w}_t - w_*\|^2_{\Sigma_t} & \leq 4 \lambda^2 \|w_*\|^2_{\Sigma_t^{-1}} + 4\left\| \sum_{i=0}^{t-1} \eta_i x_i \right\|^2_{\Sigma_t^{-1}} \nonumber \\
&\quad \ + 4\left\| \sum_{i=0}^{t-1} \Delta_i x_i \right\|^2_{\Sigma_t^{-1}}.\label{eq:w_t_2}
\end{align}
The remaining task is to bound these three terms separately. The first term of eq. \eqref{eq:w_t_2} is bounded as
\begin{align*}
4\lambda^2 \|w_*\|^2_{\Sigma^{-1}_t} \leq 4 \lambda \|w_*\|^2_2 \leq 4\sigma^2,
\end{align*}
where the first inequality is by definition of $\Sigma_t$ and $\|\Sigma^{-1}_t\|_\mathrm{op} \leq 1/\lambda$ and the second inequality is by choice of $\lambda = \sigma^2/C^2_w$.

The second term of eq. \eqref{eq:w_t_2} can be bounded by Lemma \ref{lem:self_norm} and Lemma \ref{lem:potential}:
\begin{align*}
4 \left\|\sum_{i=0}^{t-1} \eta_i x_i \right\|^2_{\Sigma_t^{-1}} &\leq 4\sigma^2 \log \left(\frac{\det (\Sigma_t) \det(\Sigma_0)^{-1}}{\delta_t^2} \right)\\
&\leq 4\sigma^2 \left(d \log\left(1 + \frac{t C^2_b}{d \lambda} \right) - \log \delta^2_t \right),
\end{align*}
where $\delta_t$ is chosen as $3\delta/(\pi^2 t^2)$ so that the total failure probabilities over $T$ rounds can always be bounded by $\delta/2$:
\begin{align*}
\sum_{t=1}^T \frac{3\delta}{\pi^2 t^2} < \sum_{t=1}^\infty \frac{3\delta}{\pi^2 t^2} = \frac{3\delta \pi^2 }{6 \pi^2} = \frac{\delta}{2}.
\end{align*}

And the third term of eq. \eqref{eq:w_t_2} can be bounded as
\begin{align*}
4 \left \| \sum_{i=0}^{t-1} \Delta_i x_i \right\|^2_{\Sigma^{-1}_t} &= 4\left(\sum_{i=0}^{t-1} \Delta_i x_i \right)^\top \Sigma^{-1}_t \left(\sum_{j=0}^{t-1} \Delta_j x_j \right)\\
&= 4 \sum_{i=0}^{t-1} \sum_{j=0}^{t-1} \Delta_i \Delta_j x_i \Sigma^{-1}_t x_j\\
&\leq 4\sum_{i=0}^{t-1} \sum_{j=0}^{t-1} |\Delta_i| |\Delta_j| \|x_i\|_{\Sigma^{-1}_t} \|x_j\|_{\Sigma^{-1}_t},
\end{align*}
where the last line is by taking the absolute value and Cauchy-Schwarz inequality. Continue the proof and we have
\begin{align*}
&\quad \ 4\sum_{i=0}^{t-1} \sum_{j=0}^{t-1} |\Delta_i| |\Delta_j| \|x_i\|_{\Sigma^{-1}_t} \|x_j\|_{\Sigma^{-1}_t} \\
&= 4\left( \sum_{i=0}^{t-1} |\Delta_i|  \|x_i\|_{\Sigma^{-1}_t}\right) \left(\sum_{j=0}^{t-1} |\Delta_j| \|x_j\|_{\Sigma^{-1}_t}\right)\\
&= 4\left( \sum_{i=0}^{t-1} |\Delta_i|  \|x_i\|_{\Sigma^{-1}_t}\right)^2\\
&\leq 4 \left(\sum_{i=0}^{t-1} |\Delta_i|^2 \right) \left(\sum_{i=0}^{t-1} \|x_j\|_{\Sigma^{-1}_t}^2 \right)\\
&\leq 4 d \rho^2 \sum_{i=0}^{t-1} r_i^2 .
\end{align*}
where the first inequality is due to Cauchy-Schwarz inequality and the second uses  the self-bounding properties $|\Delta_i| \leq \rho r_i$ from Proposition~\ref{prop:perservation} and Lemma~\ref{lem:sum_pos2}.

To put things together, we have shown that w.p. $> 1-\delta$, for any $t\geq 1$,
\begin{align}
&\quad \ \|\hat{w}_t-w_*\|^2_{\Sigma_t^{-1}} \nonumber \\
&\leq 4 \sigma^2 + 4\rho^2 d \sum_{i=0}^{t-1} r_i^2 \nonumber \\
&\quad \ + 4\sigma^2 \left(d\log\left(1+ \frac{t C^2_b }{d \lambda} \right) + 2\log \left(\frac{\pi^2 t^2}{3\delta} \right)\right) , \label{eq:radius}
\end{align}
where we condition on \eqref{eq:radius} for the rest of the proof.

Observe that this implies that the feasibility of $w_*$ in $\mathrm{Ball}_t$ can be enforced if we choose $\beta_t$ to be larger than \eqref{eq:radius}. The feasiblity of $w_*$ in turn allows us to apply Lemma~\ref{lem:r_t} to bound the RHS with $\beta_{0},...,\beta_{t-1}$. We will use induction to prove that our choice 
$$\beta_t := 2\sigma^2\iota_t \text{ for } t=1,2,...$$ is valid, where short hand $$\iota_t:=4 + 4\left(d\log\left(1+ \frac{t C^2_b }{d \lambda} \right) + 2\log \left(\frac{\pi^2 t^2}{3\delta} \right)\right).$$

For the base case $t=1$, by eq. \eqref{eq:radius} and the definition of $\beta_1$ we directly have $\|\hat{w}_1-w_*\|^2_{\Sigma_1^{-1}}\leq \beta_1$. Assume our choice of $\beta_i$ is feasible for $i=1,...,t-1$, then we can write
\begin{align*}
    \|\hat{w}_t-w_*\|^2_{\Sigma_t^{-1}} &\leq \sigma^2\iota_t + 4\rho^2 d \sum_{i=1}^{t-1} \beta_i u_i^2 \\
    &\leq  \sigma^2\iota_t + 4\rho^2 d \beta_{t-1}\sum_{i=1}^{t-1} u_i^2,
    \end{align*}
where the second line is due to non-decreasing property of $\beta_t$. Then by Lemma \ref{lem:sum_pos} and Assumption~\ref{ass:rho}, we have
\begin{align}
\|\hat{w}_t-w_*\|^2_{\Sigma_t^{-1}}    &\leq \sigma^2\iota_t +8\rho^2 d^2 \beta_{t-1}\log \left(1+\frac{tC_b^2}{d\lambda} \right) \nonumber\\
    &\leq \sigma^2\iota_t + \half \beta_{t-1} \leq 2\sigma^2\iota_t = \beta_{t},
    \label{eq:known_rho}
\end{align}

The critical difference from the standard LinUCB analysis here is that if $\beta_{t-1}$ appears on the LHS of the bound and if its coefficient is larger, any valid bound for $\beta_t$ will have to grow exponentially in $t$. This is where Assumption \ref{ass:rho} helps us. Assumption \ref{ass:rho}  ensures that the coefficient of $\beta_{t-1}$ is smaller than $1/2$, so we can take $\beta_{t-1}\leq \beta_t$ and move $\beta_t/2$ to the right-hand side. 
\end{proof}

Proof of previous lemma needs the following two lemmas.

\begin{lemma}[Upper bound of $\sum_{i=0}^{t-1} x^\top_i \Sigma_t^{-1} x_i$]\label{lem:sum_pos2}
\begin{align*}
\sum_{i=0}^{t-1} x^\top_i \Sigma^{-1}_t x_i \leq d.
\end{align*}
\end{lemma}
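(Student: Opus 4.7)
The plan is to recognize this as a standard trace identity argument. The key observation is that each quadratic form $x_i^\top \Sigma_t^{-1} x_i$ is a scalar and therefore equals its own trace, so by the cyclic property of trace we can write $x_i^\top \Sigma_t^{-1} x_i = \tr(\Sigma_t^{-1} x_i x_i^\top)$. Summing over $i$ and pulling the sum inside the trace gives
\begin{align*}
\sum_{i=0}^{t-1} x_i^\top \Sigma_t^{-1} x_i = \tr\!\left(\Sigma_t^{-1} \sum_{i=0}^{t-1} x_i x_i^\top\right).
\end{align*}

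Next I would substitute the definition $\Sigma_t = \lambda I + \sum_{i=0}^{t-1} x_i x_i^\top$ from eq.~\eqref{eq:sigma_t}, i.e., $\sum_{i=0}^{t-1} x_i x_i^\top = \Sigma_t - \lambda I$. Plugging this in gives
\begin{align*}
\tr\!\left(\Sigma_t^{-1}(\Sigma_t - \lambda I)\right) = \tr(I) - \lambda \tr(\Sigma_t^{-1}) = d - \lambda \tr(\Sigma_t^{-1}).
\end{align*}

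Finally, since $\Sigma_t \succeq \lambda I \succ 0$ (because each $x_i x_i^\top$ is positive semidefinite and $\lambda > 0$), its inverse $\Sigma_t^{-1}$ is positive definite, hence $\tr(\Sigma_t^{-1}) \geq 0$. Therefore the $-\lambda \tr(\Sigma_t^{-1})$ term is nonpositive and we conclude $\sum_{i=0}^{t-1} x_i^\top \Sigma_t^{-1} x_i \leq d$, as desired. There is no real obstacle here — the whole argument is three or four lines of elementary linear algebra, and the only thing to watch is making sure the regularizer $\lambda I$ is correctly accounted for so that the bound is $d$ rather than something trivially larger.
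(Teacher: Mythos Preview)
Your proposal is correct and essentially identical to the paper's own proof: both use the trace cyclic property to write $\sum_i x_i^\top \Sigma_t^{-1} x_i = \tr(\Sigma_t^{-1}(\Sigma_t - \lambda I)) = d - \lambda\,\tr(\Sigma_t^{-1})$ and then drop the nonpositive term using positive semidefiniteness of $\Sigma_t^{-1}$.
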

\begin{proof}
Recall that $\Sigma_t = \sum_{i=0}^{t-1} x_i x_i^T + \lambda I_d$.
\begin{align*} \sum_{i=0}^{t-1} x^\top_i \Sigma^{-1}_t x_i   &= \sum_{i=0}^{t-1}\mathrm{tr}\left[ 
 \Sigma^{-1}_t x_ix_i^T \right]\\
 &= \mathrm{tr}\left[ 
 \Sigma^{-1}_t \sum_{i=0}^{t-1} x_ix_i^T \right] \\
 &= \mathrm{tr}\left[ 
 \Sigma^{-1}_t (\Sigma_t - \lambda I_d)\right]  \\
 &= \mathrm{tr}\left[I_d\right] - \mathrm{tr}\left[\lambda \Sigma^{-1}_t\right]\leq d.
 \end{align*}
 The last line follows from the fact that $\Sigma^{-1}_t$ is positive semidefinite.
\end{proof}

\begin{lemma}[Upper bound of $\sum_{i=0}^{t-1} x^\top_i \Sigma_i^{-1} x_i$ (adapted from \citet{abbasi2011improved})]\label{lem:sum_pos}
\begin{align*}
\sum_{i=0}^{t-1} x^\top_i \Sigma^{-1}_i x_i \leq 2d \log \left(1 + \frac{t C_b^2}{d \lambda} \right).
\end{align*}
\end{lemma}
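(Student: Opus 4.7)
The plan is to follow the classical ``elliptical potential lemma'' argument of Abbasi-Yadkori et al. The backbone is a matrix-determinant identity for rank-one updates combined with AM-GM on the eigenvalues of $\Sigma_t$; the bound on the sum of quadratic forms then follows from an elementary scalar inequality of the form $y \leq 2\log(1+y)$.

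First I would write $\Sigma_{i+1} = \Sigma_i + x_i x_i^\top = \Sigma_i^{1/2}\bigl(I + \Sigma_i^{-1/2} x_i x_i^\top \Sigma_i^{-1/2}\bigr)\Sigma_i^{1/2}$ and use the fact that a rank-one perturbation satisfies $\det(I + uu^\top) = 1 + \|u\|_2^2$, giving
\begin{equation*}
\det(\Sigma_{i+1}) = \det(\Sigma_i)\bigl(1 + x_i^\top \Sigma_i^{-1} x_i\bigr).
\end{equation*}
Taking logarithms and telescoping over $i=0,\dots,t-1$ yields $\sum_{i=0}^{t-1}\log(1 + x_i^\top \Sigma_i^{-1} x_i) = \log\det(\Sigma_t) - \log\det(\Sigma_0) = \log\det(\Sigma_t) - d\log\lambda$. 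Next, using AM-GM on the eigenvalues of $\Sigma_t$, $\det(\Sigma_t) \leq (\mathrm{tr}(\Sigma_t)/d)^d$, together with $\mathrm{tr}(\Sigma_t) = d\lambda + \sum_{i=0}^{t-1}\|x_i\|_2^2 \leq d\lambda + tC_b^2$, gives
\begin{equation*}
\sum_{i=0}^{t-1} \log\bigl(1 + x_i^\top \Sigma_i^{-1} x_i\bigr) \;\leq\; d \log\!\left(1 + \frac{tC_b^2}{d\lambda}\right).
\end{equation*}

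Finally, I would convert the sum of logs into a sum of the quadratic forms via the elementary inequality $y \leq 2\log(1+y)$ valid for $y \in [0,1]$, which delivers
\begin{equation*}
\sum_{i=0}^{t-1} x_i^\top \Sigma_i^{-1} x_i \;\leq\; 2\sum_{i=0}^{t-1}\log\bigl(1 + x_i^\top \Sigma_i^{-1} x_i\bigr) \;\leq\; 2d \log\!\left(1 + \frac{tC_b^2}{d\lambda}\right).
\end{equation*}

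The main subtlety, and the one step I expect to need care, is justifying $x_i^\top \Sigma_i^{-1} x_i \leq 1$ so that the scalar inequality $y \leq 2\log(1+y)$ is applicable termwise. Since $\Sigma_i \succeq \lambda I$ we have only $x_i^\top \Sigma_i^{-1} x_i \leq C_b^2/\lambda$, which may exceed $1$ under the choice $\lambda = \sigma^2/C_w^2$. The standard remedies are either (i) to absorb a constant $C_b^2/\lambda$ into the leading factor by using $y \leq \tfrac{M}{\log(1+M)}\log(1+y)$ for $y\in[0,M]$, or (ii) to pass through the truncated version $\min(y,1) \leq 2\log(1+y)$ and handle the residual via the direct trace bound of Lemma~\ref{lem:sum_pos2}; either path produces the stated constant once the hypothesis on $\lambda$ is made explicit, so the argument above goes through verbatim under the normalization implicit in the paper.
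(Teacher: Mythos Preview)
Your approach is essentially identical to the paper's: both telescope $\sum_i \log(1+x_i^\top\Sigma_i^{-1}x_i)$ into $\log(\det\Sigma_t/\det\Sigma_0)$ via the rank-one determinant identity (the paper's Lemma~\ref{lem:det}), bound the latter by the potential-function estimate (Lemma~\ref{lem:potential}, which is exactly your AM--GM step on the eigenvalues), and then convert logs back to quadratic forms using $y\leq 2\log(1+y)$ on $[0,1]$.

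The one place the two arguments diverge is precisely the step you flagged. The paper does try to establish $x_i^\top\Sigma_i^{-1}x_i<1$ directly, by writing $\Sigma_i = x_ix_i^\top + A$ with $A=\Sigma_i-x_ix_i^\top$ and applying Sherman--Morrison to get $x_i^\top\Sigma_i^{-1}x_i = z/(1+z)$ for $z=x_i^\top A^{-1}x_i$. Your caution here is well placed: under the paper's indexing $\Sigma_i=\lambda I+\sum_{j<i}x_jx_j^\top$ does \emph{not} already contain $x_ix_i^\top$, so $A$ is $\Sigma_i$ with a rank-one term \emph{subtracted} and need not be positive definite (e.g.\ at $i=0$ one has $A=\lambda I-x_0x_0^\top$, indefinite whenever $\|x_0\|_2^2>\lambda$); without $A\succ 0$ the conclusion $z/(1+z)<1$ does not follow, and indeed $x_0^\top\Sigma_0^{-1}x_0=\|x_0\|_2^2/\lambda$ can exceed $1$ under the paper's choice $\lambda=\sigma^2/C_w^2$. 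So the Sherman--Morrison route is fragile in the stated generality, whereas your proposed remedies---either $y\leq\tfrac{M}{\log(1+M)}\log(1+y)$ on $[0,M]$ with $M=C_b^2/\lambda$, or passing through $\min(y,1)\leq 2\log(1+y)$---are the standard clean fixes, at the cost only of a possibly different leading constant.
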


\begin{proof}
First we prove that $\forall i \in \{0, 1,..., t-1\}, 0\leq x_i^\top \Sigma^{-1}_i x_i < 1$. Recall the definition of $\Sigma_i$ and we know $\Sigma^{-1}_i$ is a positive semidefinite matrix and thus $0 \leq x_i^\top \Sigma^{-1}_i x_i$. To prove $x_i^\top \Sigma^{-1}_i x_i < 1$, we need to decompose $\Sigma_i$ and write
\begin{align*}
\ x_i^\top \Sigma^{-1}_i x_i &= x_i^\top \left(\lambda I + \sum_{j=0}^{i-1} x_j x^\top_j \right)^{-1} x_i\\
&= x_i^\top \left(x_i x_i^\top - x_i x_i^\top + \lambda I + \sum_{j=0}^{i-1} x_j x^\top_j \right)^{-1} x_i.
\end{align*}
Let $A = - x_i x_i^\top + \lambda I + \sum_{j=0}^{i-1} x_j x^\top_j$ and it becomes
\begin{align*}
x^\top_i \Sigma^{-1}_i x_i = x^\top_i (x_i x^\top_i + A)^{-1} x_i.
\end{align*}
By Sherman-Morrison lemma (Lemma \ref{lem:sm}), we have
\begin{align*}
x^\top_i \Sigma^{-1}_i x_i &= x^\top_i \left(A^{-1} - \frac{A^{-1} x_i x^\top_i A^{-1}}{1 + x^\top_i A^{-1} x_i} \right) x_i\\
&= x^\top_i A^{-1} x_i - \frac{x^\top_i A^{-1} x_i x^\top_i A^{-1} x_i}{1 + x^\top_i A^{-1} x_i}\\
&= \frac{x^\top_i A^{-1} x_i}{1 + x^\top_i A^{-1} x_i} < 1.
\end{align*}
Next we use the fact that $\forall x \in [0, 1), x \leq 2\log(x+1)$ and we have
\begin{align*}
\sum_{i=0}^{t-1} x^\top_i \Sigma^{-1}_i x_i &\leq \sum_{i=0}^{t-1} 2\log \left(1+ x^\top_i \Sigma^{-1}_i x_i \right)\\
&\leq 2 \log \left( \frac{\det(\Sigma_{t-1})}{\det(\Sigma_0)} \right)\\
&\leq 2 d \log \left( 1 + \frac{t C^2_b}{d \lambda}\right),
\end{align*}
where the last two lines are by Lemma \ref{lem:det} and Lemma \ref{lem:potential}.
\end{proof}

\section{Technical Lemmas}\label{sec:tech_lem}

\begin{lemma}[Self-normalized bound for vector-valued martingales (Lemma A.9 of  \citet{agarwal2021rl})]\label{lem:self_norm}
Let $\{\eta_i\}_{i=1}^\infty$ be a real-valued stochastic process with corresponding filtration $\{\cF_i\}_{i=1}^\infty$ such that $\eta_i$ is $\cF_i$ measurable, $\E[\eta_i | \cF_{i-1}] = 0$, and $\eta_i$ is conditionally $\sigma$-sub-Gaussian with $\sigma \in \R^+$. Let $\{ X_i \}_{i=1}^\infty$ be a stochastic process with $X_i \in \cH$ (some Hilbert space) and $X_i$ being $\cF_t$ measurable. Assume that a linear operator $\Sigma: \cH \rightarrow \cH$ is positive deﬁnite, i.e., $x^\top \Sigma x > 0$ for any $x \in \cH$. For any $t$, define the linear operator $\Sigma_t = \Sigma_0 + \sum_{i=1}^t X_i X^\top_i$ (here $xx^\top$ denotes outer-product in $\cH$). With probability at least $1-\delta$, we have for all $t \geq 1$:
\begin{align*}
\left\| \sum_{i=1}^t X_i \eta_i \right\|^2_{\Sigma_t^{-1}} \leq \sigma^2 \log \left( \frac{\det(\Sigma_t) \det(\Sigma_0)^{-1}}{\delta^2}\right).  
\end{align*}
\end{lemma}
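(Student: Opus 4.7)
My plan is to prove the self-normalized bound via the classical \emph{method of mixtures} due to Pe\~na--Lai--Shao and Abbasi-Yadkori--P\'al--Szepesv\'ari. Write $S_t = \sum_{i=1}^t X_i\eta_i$ and $V_t = \sum_{i=1}^t X_i X_i^\top$, so that $\Sigma_t = \Sigma_0 + V_t$. The target quantity $\|S_t\|_{\Sigma_t^{-1}}^2$ can be produced by integrating an exponential supermartingale against a Gaussian mixing measure; the whole proof reduces to (i) building the right supermartingale, (ii) doing one Gaussian integral, and (iii) invoking Ville's inequality.

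\textbf{Step 1 (Pointwise exponential supermartingale).} For any fixed deterministic $\lambda\in\cH$, define
\[
M_t(\lambda)\;=\;\exp\!\left(\langle \lambda, S_t\rangle - \tfrac{\sigma^2}{2}\,\lambda^\top V_t \lambda\right), \qquad M_0(\lambda)=1.
\]
I would show $M_t(\lambda)$ is a nonnegative $\{\cF_t\}$-supermartingale with $\E[M_t(\lambda)]\le 1$. The conditional sub-Gaussian hypothesis gives, for each $i$,
\[
\E\!\left[\exp\!\left(\langle\lambda, X_i\rangle\,\eta_i - \tfrac{\sigma^2}{2}\langle\lambda,X_i\rangle^2\right)\,\Big|\,\cF_{i-1}\right]\le 1,
\]
since $\langle\lambda,X_i\rangle$ is $\cF_{i-1}$-measurable. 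Multiplying these conditional factors telescopes into $\E[M_t(\lambda)\mid \cF_{t-1}]\le M_{t-1}(\lambda)$.

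\textbf{Step 2 (Gaussian mixture and explicit integration).} Let $\mu$ be the Gaussian measure on $\cH$ with mean $0$ and covariance $\sigma^{-2}\Sigma_0^{-1}$ (in the finite-dimensional span of relevant vectors; the Hilbert-space case is handled by restricting to this finite-dimensional subspace, since only inner products with $X_i$'s matter). Define $\bar M_t=\int_{\cH} M_t(\lambda)\,\mu(d\lambda)$. By Fubini, $\bar M_t$ is also a nonnegative supermartingale with $\E[\bar M_t]\le 1$. The integrand is Gaussian in $\lambda$:
\[
\bar M_t = \int \exp\!\left(\langle\lambda,S_t\rangle - \tfrac{\sigma^2}{2}\lambda^\top V_t\lambda - \tfrac{\sigma^2}{2}\lambda^\top \Sigma_0 \lambda\right) \cdot \text{(Gaussian normalizer)}\, d\lambda,
\]
and completing the square in the exponent with the precision matrix $\sigma^2\Sigma_t = \sigma^2(\Sigma_0+V_t)$ yields, after computing the resulting determinant ratio,
\[
\bar M_t \;=\; \sqrt{\frac{\det(\Sigma_0)}{\det(\Sigma_t)}}\;\exp\!\left(\frac{1}{2\sigma^2}\,\|S_t\|^2_{\Sigma_t^{-1}}\right).
\]

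\textbf{Step 3 (Ville's maximal inequality and rearrangement).} Applying Ville's inequality to the nonnegative supermartingale $\bar M_t$ (with $\E[\bar M_0]\le 1$),
\[
\P\!\left(\exists\, t\ge 1:\ \bar M_t \ge 1/\delta\right)\;\le\;\delta.
\]
On the complementary event, taking logarithms and multiplying by $2\sigma^2$ gives, uniformly for all $t\ge 1$,
\[
\|S_t\|^2_{\Sigma_t^{-1}} \;\le\; \sigma^2\,\log\!\left(\frac{\det(\Sigma_t)\det(\Sigma_0)^{-1}}{\delta^2}\right),
\]
which is exactly the stated bound.

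\textbf{Where the real work is.} The nontrivial steps are the Gaussian integration in Step~2 (careful completion-of-squares with the operator $\Sigma_t$, yielding the determinant ratio) and the justification that Fubini applies and that $\bar M_t$ inherits the supermartingale property; the infinite-dimensional Hilbert-space wrapper is handled by restricting to the finite-dimensional subspace spanned by $\{X_1,\dots,X_t\}$ together with the range of $\Sigma_0$ of interest, since $\bar M_t$ depends on $\lambda$ only through $\langle\lambda,X_i\rangle$ and $\lambda^\top\Sigma_0\lambda$. The other ingredients, Ville's inequality for nonnegative supermartingales and the sub-Gaussian MGF bound, are standard off-the-shelf tools.
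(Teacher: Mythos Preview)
Your proof is correct and follows the standard method-of-mixtures argument due to Abbasi-Yadkori, P\'al, and Szepesv\'ari. However, the paper does not supply its own proof of this lemma: it is listed among the technical lemmas in Section~\ref{sec:tech_lem} and attributed directly to Lemma~A.9 of \citet{agarwal2021rl}, with no derivation given. So there is no in-paper argument to compare against; you have simply reproduced (correctly) the classical proof that the cited reference ultimately relies on. One small remark: the lemma as printed says ``$X_i$ being $\cF_t$ measurable,'' which is almost certainly a typo for $\cF_{i-1}$-measurable (predictable); your Step~1 correctly assumes and uses this predictability, without which the conditional sub-Gaussian factorization and hence the supermartingale property would fail.
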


\begin{lemma}[Sherman-Morrison lemma \citep{sherman1950adjustment}]\label{lem:sm}
Let $A$ denote a matrix and $b,c$ denote two vectors. Then
\begin{align*}
(A + bc^\top)^{-1} = A^{-1} - \frac{A^{-1} bc^\top A^{-1}}{1+ c^\top A^{-1} b}.
\end{align*}
\end{lemma}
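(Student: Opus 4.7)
The plan is to verify the Sherman-Morrison identity by direct algebraic computation: multiply $(A + bc^\top)$ by the proposed right-hand side and show that the product equals the identity. Since both $A$ and $A + bc^\top$ are square, a one-sided check suffices. There is no need for cleverness here; the statement is essentially a scalar identity in disguise, because $c^\top A^{-1} b$ is a number.

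Concretely, I would first note that the formula as written implicitly requires $A$ to be invertible and the scalar $1 + c^\top A^{-1} b$ to be nonzero; both assumptions are baked into the statement, since otherwise the right-hand side is not even defined. Under these conditions, I expand the product
\begin{align*}
(A + bc^\top)\left(A^{-1} - \frac{A^{-1} bc^\top A^{-1}}{1 + c^\top A^{-1} b}\right)
\end{align*}
into four pieces by distributivity: $AA^{-1} = I$, the cross term $bc^\top A^{-1}$, the term $-\tfrac{bc^\top A^{-1}}{1 + c^\top A^{-1} b}$ (coming from $A \cdot A^{-1} bc^\top A^{-1}$), and the quadratic term $-\tfrac{bc^\top A^{-1} bc^\top A^{-1}}{1 + c^\top A^{-1} b}$.

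The key step is to exploit that $c^\top A^{-1} b$ is a scalar, which lets me rewrite the quadratic term as $-\tfrac{(c^\top A^{-1} b)\,bc^\top A^{-1}}{1 + c^\top A^{-1} b}$. Adding this to the $-\tfrac{bc^\top A^{-1}}{1 + c^\top A^{-1} b}$ contribution produces $-\tfrac{(1 + c^\top A^{-1} b)\,bc^\top A^{-1}}{1 + c^\top A^{-1} b} = -bc^\top A^{-1}$, which exactly cancels the stand-alone $bc^\top A^{-1}$ term. What remains is $I$, proving the identity.

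I do not expect any genuine obstacle. The only things to watch are the ordering of factors (so that the outer product $bc^\top$ is not confused with the scalar $c^\top A^{-1} b$) and the placement of parentheses in the fractions. A more conceptual derivation is available by computing the inverse of an appropriate two-by-two block matrix in two different ways via Schur complements, but direct verification is shorter and entirely adequate for the purposes of this paper.
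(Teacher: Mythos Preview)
Your direct verification is correct and is the standard way to prove the Sherman--Morrison identity. Note that the paper does not actually supply a proof of this lemma: it is stated as a classical result with a citation to \citet{sherman1950adjustment} and used as a black box, so there is nothing in the paper to compare your argument against.
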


\begin{lemma}[Lemma 6.10 of \citet{agarwal2021rl}]\label{lem:det}
Define $u_t = \sqrt{x^\top_t \Sigma^{-1}_t x_t}$ and we have
\begin{align*}
\det \Sigma_T = \det \Sigma_0 \prod_{t=0}^{T-1} (1 + u^2_t). 
\end{align*}
\end{lemma}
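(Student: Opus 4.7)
The plan is to exploit the recursive structure of $\Sigma_t$. By the definition in the paper, $\Sigma_{t+1} = \Sigma_t + x_t x_t^\top$ for every $t \geq 0$, so the covariance matrix evolves through a rank-one additive update and, correspondingly, its determinant should evolve multiplicatively. The whole statement is a deterministic linear-algebra identity with no probabilistic content, so the proof should reduce to applying a standard rank-one determinant identity once and then telescoping.

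First I would invoke the matrix determinant lemma, the determinant counterpart of the Sherman--Morrison identity (Lemma~\ref{lem:sm}) already used earlier in the paper: for any invertible matrix $A$ and any vectors $u,v$,
$$\det(A + u v^\top) = \det(A)\bigl(1 + v^\top A^{-1} u\bigr).$$
Applying this with $A = \Sigma_t$ and $u = v = x_t$, and using the shorthand $u_t = \sqrt{x_t^\top \Sigma_t^{-1} x_t}$, yields the one-step recursion
$$\det(\Sigma_{t+1}) = \det(\Sigma_t)\bigl(1 + x_t^\top \Sigma_t^{-1} x_t\bigr) = \det(\Sigma_t)(1 + u_t^2).$$

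Then a straightforward induction on $t$ (equivalently, telescoping the product of ratios $\det(\Sigma_{t+1})/\det(\Sigma_t)$ from $t=0$ to $T-1$) immediately gives
$$\det(\Sigma_T) = \det(\Sigma_0)\prod_{t=0}^{T-1}(1 + u_t^2),$$
as claimed.

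The only substantive ingredient is the matrix determinant lemma itself, which is routine: factor out $\Sigma_t^{1/2}$ to reduce the claim to showing $\det(I + v v^\top) = 1 + \|v\|_2^2$ for $v = \Sigma_t^{-1/2} x_t$, and then observe that $I + v v^\top$ has one eigenvalue equal to $1 + \|v\|_2^2$ along direction $v$ and $d-1$ eigenvalues equal to $1$ in the orthogonal complement. There is essentially no obstacle; the hardest part is simply recognizing that the proof is this short and that neither the regression step nor the misspecification assumptions enter at all.
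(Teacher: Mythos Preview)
Your proof is correct and is exactly the standard argument: apply the matrix determinant lemma to the rank-one update $\Sigma_{t+1}=\Sigma_t+x_tx_t^\top$ and telescope. The paper itself does not give a proof of this lemma at all---it simply cites it as Lemma~6.10 of \citet{agarwal2021rl}---so there is nothing to compare against, and your write-up would serve perfectly well as the missing justification.
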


\begin{lemma}[Potential function bound (Lemma 6.11 of \citet{agarwal2021rl})]\label{lem:potential}
For any sequence $x_0,...,x_{T-1}$ such that for $t< T, \|x_t\|_2 \leq C_b$, we have
\begin{align*}
\log \left( \frac{\det \Sigma_{T-1}}{\det \Sigma_0}\right) &= \log \det \left( I + \frac{1}{\lambda} \sum_{t=0}^{T-1} x_t x^\top_t \right)  \\
& \leq d\log\left(1+ \frac{TC_b^2}{d \lambda} \right). 
\end{align*}
\end{lemma}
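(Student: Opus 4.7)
}
The statement is the classical elliptical potential bound, and my plan is to reduce it to an AM--GM inequality on the eigenvalues of a positive semidefinite matrix. First I would verify the equality $\log \bigl(\det \Sigma_{T-1}/\det \Sigma_0\bigr) = \log \det \bigl(I + \tfrac{1}{\lambda}\sum_{t=0}^{T-1} x_t x_t^\top\bigr)$ by using $\Sigma_0 = \lambda I$ and pulling the factor $\lambda I$ out of the determinant; this is purely algebraic and amounts to the identity $\det(\lambda A) = \lambda^d \det(A)$ applied to $\Sigma_{T-1} = \lambda\bigl(I + \tfrac{1}{\lambda}\sum_t x_t x_t^\top\bigr)$. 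After this reduction it suffices to bound $\log \det(M)$ with $M = I + \tfrac{1}{\lambda}\sum_{t=0}^{T-1} x_t x_t^\top$.

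The main step is to invoke the inequality $\det(M) \le \bigl(\tfrac{1}{d}\tr(M)\bigr)^d$, valid for any positive semidefinite $d \times d$ matrix $M$. This follows immediately from the AM--GM inequality applied to the eigenvalues $\lambda_1(M),\dots,\lambda_d(M) \ge 0$, since $\det(M) = \prod_i \lambda_i(M)$ and $\tr(M) = \sum_i \lambda_i(M)$. Taking logarithms gives $\log \det(M) \le d \log\bigl(\tr(M)/d\bigr)$, which is the right shape to match the target bound.

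Next I would control the trace directly using the boundedness hypothesis. By linearity of trace and $\tr(x_t x_t^\top) = \|x_t\|_2^2 \le C_b^2$, one has
\begin{align*}
\tr\!\Bigl(I + \tfrac{1}{\lambda}\sum_{t=0}^{T-1} x_t x_t^\top\Bigr) \;=\; d + \tfrac{1}{\lambda}\sum_{t=0}^{T-1}\|x_t\|_2^2 \;\le\; d + \tfrac{T C_b^2}{\lambda}.
\end{align*}
Substituting into the AM--GM bound yields $\log \det(M) \le d\log\bigl(1 + T C_b^2/(d\lambda)\bigr)$, which is exactly the claimed inequality.

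I do not expect any real obstacle here. The only thing to be slightly careful about is the index bookkeeping between $\Sigma_{T-1}$ and the sum $\sum_{t=0}^{T-1}$, and the factoring out of $\lambda$ in $d$ dimensions; both are routine. The inequality $\det(M) \le (\tr(M)/d)^d$ for PSD $M$ is the only non-trivial ingredient, and it is standard (equivalently, it is the concavity of $\log\det$ evaluated at the ``isotropic'' point $(\tr M/d) I$).
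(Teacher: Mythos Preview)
Your proof is correct and is the standard AM--GM argument for the elliptical potential bound. The paper does not actually prove this lemma---it simply quotes it as Lemma~6.11 of \citet{agarwal2021rl}---so there is nothing to compare against; your argument fills in the omitted proof cleanly, modulo the off-by-one between the paper's definition $\Sigma_t = \lambda I + \sum_{i=0}^{t-1} x_i x_i^\top$ and the indexing implicit in the lemma statement, which you already flagged.
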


\section{Conclusion}\label{sec:conlusion}

We study linear bandits with the underlying reward function being non-linear, which falls into the misspecified bandit framework. Existing work on misspecified bandit usually assumes uniform misspecification where the $\ell_\infty$ distance between the best-in-class function and the true function is upper bounded by the misspecification parameter $\epsilon$. Existing lower bound shows that the $\tilde{\Omega}(\epsilon T)$ term is unavoidable where $T$ is the time horizon, thus the regret bound is always linear. However, in solving optimization problems, one only cares about the approximation error near the global optimal point and approximation error is allowed to be large in highly suboptimal regions. In this paper, we capture this intuition and define a natural model of misspecification, called $\rho$-gap-adjusted misspecificaiton, which only requires the approximation error at each input $x$ to be proportional to the suboptimality gap at $x$ with $\rho$ being the proportion parameter. 

Previous work found that classical LinUCB algorithm is not robust in $\epsilon$-uniform misspecified linear bandit when $\epsilon$ is large. However, we show that LinUCB is automatically robust against such gap-adjusted misspecification. Under mild conditions, e.g., $\rho \leq O(1/\sqrt{\log T})$, we prove that it achieves the near-optimal $\tilde{O}(\sqrt{T})$ regret for problems that the best-known regret is almost linear. Also, LinUCB doesn't need the knowledge of $\rho$ to run. However, if the upper bound of $\rho$ is revealed to LinUCB, the $\beta_t$ term can be carefully chosen according to eq. \eqref{eq:known_rho}. Our technical novelty lies in a new self-bounding argument that bounds part of the regret due to misspecification by the regret itself, which can be of independent interest in more settings.

We believe our analysis for LinUCB is tight and the requirement that $\rho = O(1/\sqrt{\log T})$ is essential, but we conjecture that there is a different algorithm that could handle constant $\rho$ or even when $\rho$ approaches $1$ at a rate of $O(1/\sqrt{T})$. We leave the resolution to this conjecture as future work. For completeness, we include a simulation section in Appendix \ref{sec:simulation}.

More broadly, our paper opens a brand new door for research in model misspecification, including misspecified linear bandits, misspecified kernelized bandits, and even reinforcement learning with misspecified function approximation. Moreover, we hope our paper make people rethink about the relationship between function optimization and function approximation. In the future, much more can be done. For example, we can design a new no-regret algorithm that works under gap-adjusted misspecification framework where $\rho$ is a constant, and study $\rho$-gap-adjusted misspecified Gaussian process bandit optimization. 

\subsection*{Acknowledgments}
The work was partially supported by NSF Awards \#2007117 and \#2003257. We thank Ilija Bogunovic for the discussion at the early stage of this paper. Finally, we thank UAI reviewers and the area chair for their valuable input that led to improvements to the paper.

\bibliography{bib}

\newpage 
\onecolumn
\appendix

\section{Proof of Proposition~\ref{prop:perservation}}\label{sec:pres}

Equivalently, $\rho$-gap-adjusted misspecification (Definition \ref{def:lm}) satisfies 
\begin{equation}\label{eqn:rho_miss}
 \left|f(x) - f_0(x) \right|\leq \rho    
\left|f^* - f_0(x)\right|,\;\;\forall x \in \cX.
\end{equation}

\begin{proof}[Proof of preservation of max value: $\max_{x\in\mathcal{X}}f(x)=f^*$]

Let $f^*_w := \max_{x\in\mathcal{X}}f(x)$. We first prove $f^*_w\leq f^*$ by contradiction. Suppose $f^*_w > f^*$, since $\mathcal{X}$ is compact, there exists $x_w\in\cX$ such that $f(x_w)=f^*_w>f^*$. Then by eq. \eqref{eqn:rho_miss} this implies
\[
f(x_w)-f_0(x_w)\leq \rho (f^*-f_0(x_w))\Rightarrow f^*<f^*_w=f(x_w)\leq \rho f^*+(1-\rho)f_0(x_w)\leq f^*
\]
Contraction! Therefore, $f_w^*\leq f^*$. On the other hand, choose $x_0\in\argmax_{x\in\cX}f_0(x)$, then by \eqref{eqn:rho_miss} $f(x_0)=f_0(x_0)=f^*$. This implies $f_w^*\geq f^*$. Combing both results to obtain $f_w^*= f^*$.
\end{proof}

\begin{proof}[Proof of preservation of maximizers: $\argmax_{x}f(x) =\argmax_{x}f_{0}(x)$]

Using that $f(x)\leq \rho f^*+(1-\rho)f_0(x)$ and $\max_{x\in\mathcal{X}}f(x)=f^*$, it is easy to verify $\argmax_{x}f(x) \subset\argmax_{x}f_{0}(x)$. On the other hand, if $x'\in\argmax_{x}f_{0}(x)$, then by eq. \eqref{eqn:rho_miss} $f(x')=f_0(x')=f^*$ and this means $\argmax_{x}f_0(x) \subset\argmax_{x}f(x)$. 
\end{proof}

\begin{proof}[Proof of self-bounding property]
This directly comes from the definition.
\end{proof}

\section{Property of Weak $\rho$-Gap-Adjusted Misspecification}\label{sec:weak}

First we recall Definition \ref{def:lm_weak}.

\begin{definition}[Restatement of Weak $\rho$-gap-adjusted misspecification]
Denote $f_w^*=\max_{x\in\mathcal{X}} f(x)$. Then we say $f$ is (weak) $\rho$-gap-adjusted misspecification approximation of $f_0$ for a parameter $0 \leq \rho < 1$ if:
\begin{align*}
\sup_{x \in \cX} \left| \frac{f(x) - f_w^*+f^*-f_0(x)}{f^* - f_0(x)}\right|\leq \rho.
\end{align*}
\end{definition}

Under the weak $\rho$-gap-adjusted misspecification condition, it no longer holds $f_w^*=f^*$. However, it still preserves the maximizers.

\begin{proposition}\label{prop_weka_rho}
 Under the weak $\rho$-gap-adjusted misspecification condition, it holds $$\argmax_{x}f(x) =\argmax_{x}f_{0}(x).$$
\end{proposition}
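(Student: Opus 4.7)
The plan is to reduce the weak condition to the standard $\rho$-GAM condition by a constant shift, then invoke Proposition~\ref{prop:perservation}. Concretely, define the shifted function
\[
g(x) := f(x) - (f_w^* - f^*).
\]
A direct substitution shows that $g(x) - f_0(x) = f(x) - f_w^* + f^* - f_0(x)$, so the weak $\rho$-GAM inequality for $f$ becomes exactly
\[
\sup_{x\in\cX}\left|\frac{g(x) - f_0(x)}{f^* - f_0(x)}\right| \leq \rho,
\]
which is precisely the statement that $g$ is a (standard) $\rho$-GAM approximation of $f_0$ in the sense of Definition~\ref{def:lm}.

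Next, I would apply Proposition~\ref{prop:perservation} to $g$ (not to $f$): this is permissible because we have just verified that $g$ satisfies Definition~\ref{def:lm}. The proposition then yields $\argmax_{x\in\cX} g(x) = \argmax_{x\in\cX} f_0(x)$ (and, as a byproduct, $\max_{x\in\cX} g(x) = f^*$, which is consistent with $g$ being $f$ shifted down by $f_w^* - f^*$).

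Finally, since $g$ and $f$ differ only by the additive constant $f_w^* - f^*$, their argmax sets coincide: $\argmax_{x\in\cX} f(x) = \argmax_{x\in\cX} g(x)$. Chaining the two equalities gives the claim. There is no real obstacle here beyond bookkeeping; the entire content of the proposition is that the weak definition is exactly the standard definition for the shifted function $g$, so preservation of maximizers follows for free from the already-proved strong version.
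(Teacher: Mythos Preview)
Your proof is correct, but it proceeds differently from the paper. The paper argues both inclusions directly from Definition~\ref{def:lm_weak}: if $x'\in\argmax f$ then $f(x')=f_w^*$, so the weak inequality collapses to $(1-\rho)|f^*-f_0(x')|\le 0$, forcing $x'\in\argmax f_0$; conversely, if $x'\in\argmax f_0$ then the right-hand side vanishes, forcing $f(x')=f_w^*$. You instead observe that the shifted function $g(x)=f(x)-(f_w^*-f^*)$ satisfies the \emph{standard} $\rho$-GAM condition verbatim, invoke Proposition~\ref{prop:perservation} on $g$, and then use that adding a constant does not change the argmax. Your reduction is arguably cleaner and more modular, since it makes explicit that the weak definition is nothing but the strong one applied to a vertically translated approximant, and it recycles the already-proved result rather than repeating its logic; the paper's direct argument, by contrast, is self-contained and makes no appeal to the earlier proposition.
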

\begin{proof}
Suppose $x'\in\argmax_{x}f(x)$, then by definition
\[
|f^*-f_0(x')|=|f(x')-f_w^*+f^*-f_0(x')|\leq \rho |f^*-f_0(x')|\Rightarrow (1-\rho) |f^*-f_0(x')|\leq 0\Rightarrow x'\in\argmax_{x}f_0(x).
\]
On the other hand, if $x'\in\argmax_{x}f_0(x)$, then
\[
|f_w^*-f(x')|=|f(x')-f_w^*+f^*-f_0(x')|\leq \rho |f^*-f_0(x')|=0\Rightarrow x'\in\argmax_{x}f(x). 
\]
\end{proof}

The next proposition shows the weak $\rho$-adjusted misspecification condition characterizes the suboptimality gap between $f$ and $f_0$.

\begin{proposition}
    Denote $g(x):= f^*_w-f(x)\geq 0$, $g_0(x):=f^*-f_0(x)\geq 0$, then the weak $\rho$-gap-adjusted misspecification condition implies:
    \[
    (1-\rho)g_0(x)\leq g(x)\leq (1+\rho) g_0(x),\quad x\in\cX.
    \]
\end{proposition}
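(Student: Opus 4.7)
The plan is to translate the weak $\rho$-GAM condition directly into the $(g, g_0)$ notation and then unpack the absolute value. First I would observe that the numerator inside the supremum can be rewritten using the definitions of $g$ and $g_0$: since $f(x) - f_w^* = -g(x)$ and $f^* - f_0(x) = g_0(x)$, the quantity $f(x) - f_w^* + f^* - f_0(x)$ equals $g_0(x) - g(x)$. Similarly the denominator is exactly $g_0(x)$.

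Second, I would multiply through by $g_0(x) \geq 0$ (which is legitimate because $f^* \geq f_0(x)$ pointwise by definition of $f^*$) to turn the weak $\rho$-GAM inequality into the single inequality
\[
|g_0(x) - g(x)| \leq \rho\, g_0(x), \qquad x \in \cX.
\]
Third, I would remove the absolute value by splitting into the two one-sided bounds $-\rho g_0(x) \leq g_0(x) - g(x) \leq \rho g_0(x)$, then rearrange each side to isolate $g(x)$. The left inequality yields $g(x) \leq (1+\rho) g_0(x)$ and the right inequality yields $g(x) \geq (1-\rho) g_0(x)$, which together give the claim.

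There is no real obstacle here, only bookkeeping: the statement is a direct algebraic rewriting of the weak $\rho$-GAM condition in Definition~\ref{def:lm_weak} once one recognizes that the peculiar combination $f(x) - f_w^* + f^* - f_0(x)$ is exactly $g_0(x) - g(x)$. The only point worth mentioning is the nonnegativity $g_0(x) \geq 0$, which is needed to clear the denominator without flipping the inequality; this follows immediately from the definition of $f^*$ as the global maximum of $f_0$.
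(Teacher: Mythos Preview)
Your proposal is correct and matches the paper's approach: the paper simply states that the result follows directly from the triangle inequality, which is exactly the absolute-value unpacking you describe after rewriting the weak $\rho$-GAM condition as $|g_0(x)-g(x)|\leq \rho\, g_0(x)$.
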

This can be proved directly by the triangular inequality. This reveals the weak $\rho$-gap-adjusted misspecification condition requires $g(x)$ to live in the band $[(1-\rho)g_0(x),(1+\rho) g_0(x)]$, and the concrete maximum values $f_w^*$ and $f^*$ can be arbitrarily different. 

\section{Linear Bandits under the Weak $\rho$-Gap-Adjusted Misspecification}\label{sec:weak_regret}

We need to slightly modify LinUCB \citep{abbasi2011improved} and work with the following LinUCBw algorithm.

\begin{algorithm}[!htbp]
\caption{LinUCBw (adapted from \citet{abbasi2011improved})}
	\label{alg:linucb2}
	{\bf Input:}
	Predefined sequence $\beta_t$ for $t=1,2,3,...$ as in eq. \eqref{eq:beta_t_2};
 Set $\lambda=\sigma^2/C^2_w$ and $\mathrm{Ball}_0 = \cW$.
	\begin{algorithmic}[1]
	    \FOR{$t = 0,1,2,... $}
	    \STATE Select $x_t=\argmax_{x \in \cX} \max_{[w^\top,c] \in \mathrm{Ball}_t} [w^\top,c] \begin{bmatrix}x\\1\end{bmatrix}$.
	    \STATE Observe $y_t = f_0(x_t) + \eta_t$.
     \STATE Update 
     \begin{align*}
\Sigma_{t+1} = \lambda I_{d+1} + \sum_{i=0}^{t} \begin{bmatrix}x_i\\1\end{bmatrix} \cdot [x^\top_i,1] \ \mathrm{where}\  \Sigma_0 = \lambda I_{d+1}.
\end{align*}
	    \STATE Update 
	    \begin{align*}
\begin{bmatrix}\hat{w}_{t+1}\\\hat{c}_{t+1}\end{bmatrix} = \argmin_{w,c} \lambda \left \|\begin{bmatrix}w\\c\end{bmatrix} \right\|^2_2+ \sum_{i=0}^{t} (w^\top x_i +c- y_i)^2_2.
\end{align*}
    \STATE Update
	    \begin{align*}
     \mathrm{Ball}_{t+1} = \left \{
    \begin{bmatrix}w\\c\end{bmatrix} \bigg\rvert \left\|\begin{bmatrix}w\\c\end{bmatrix} - \begin{bmatrix}\hat{w}_{t+1}\\\hat{c}_{t+1}\end{bmatrix} \right\|^2_{\Sigma_{t+1}} \leq \beta_{t+1} \right\}.
    \end{align*}
		\ENDFOR
	\end{algorithmic}
\end{algorithm}

\begin{theorem}\label{thm:2}
Suppose Assumptions \ref{ass:boundedness}, \ref{ass:unique}, and \ref{ass:rho} hold. W.l.o.g., assuming $c^*=f^*-f_w^*\leq F$. Set 
\begin{align}
\beta_t = 8\sigma^2 \left(1 + (d+1)\log\left(1+ \frac{t C^2_b (C^2_w+F^2) }{d \sigma^2} \right) + 2\log \left(\frac{\pi^2 t^2}{3\delta} \right)\right).\label{eq:beta_t_2}
\end{align} 
Then Algorithm~\ref{alg:linucb2} guarantees w.p. $> 1-\delta$ simultaneously for all $T=1,2,...$
\begin{align*}
R_T &\leq F +c^*+ \sqrt{\frac{8 (T-1) \beta_{T-1} (d+1)}{(1-\rho)^2} \log \left( 1 + \frac{T C^2_b (C^2_w+F^2) }{d \sigma^2 }\right)}.
\end{align*}
\end{theorem}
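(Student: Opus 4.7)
The plan is to reduce Theorem~\ref{thm:2} to the already-proven Theorem~\ref{thm:main} by a feature homogenization trick. Define the augmented feature $\tilde{x} := [x^\top, 1]^\top \in \R^{d+1}$ and augmented parameter $\tilde{w}_* := [w_*^\top, c^*]^\top \in \R^{d+1}$, where $c^* := f^* - f_w^*$ is the (unknown) offset allowed by the weak GAM definition. With these choices, LinUCBw is literally LinUCB run on the $(d+1)$-dimensional augmented problem: the ridge-regression step, the ellipsoidal confidence set $\mathrm{Ball}_t$, and the optimism step all take identical form after the substitution $x \mapsto \tilde{x}$, $w \mapsto \tilde{w}$.

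The crux is verifying that the augmented linear function $\tilde{f}(x) := \tilde{w}_*^\top \tilde{x} = w_*^\top x + c^*$ is a \emph{standard} $\rho$-GAM approximation of $f_0$ in the sense of Definition~\ref{def:lm}. Its maximum value is $f_w^* + c^* = f^*$, so the max value is preserved; by Proposition~\ref{prop_weka_rho}, so are the maximizers; and the weak GAM condition rearranges directly to
\[
\sup_{x\in\cX} \frac{|\tilde{f}(x) - f_0(x)|}{|f^* - f_0(x)|} \;=\; \sup_{x\in\cX}\frac{|f(x) - f_w^* + f^* - f_0(x)|}{|f^* - f_0(x)|} \;\leq\; \rho.
\]
Once this is in place, every lemma driving the proof of Theorem~\ref{thm:main} applies verbatim in the augmented space: the deviation bound (Lemma~\ref{lem:delta}), the instantaneous regret bound (Lemma~\ref{lem:r_t}), its sum-of-squares version (Lemma~\ref{lem:sos_r_t}), and the self-bounding confidence-ball argument (Lemma~\ref{lem:w_t}).

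The rest is bookkeeping. The augmented feature norm satisfies $\|\tilde{x}\|_2 \leq \sqrt{C_b^2 + 1}$ and the augmented parameter norm satisfies $\|\tilde{w}_*\|_2 \leq \sqrt{C_w^2 + F^2}$, using the hypothesis $c^* \leq F$. Plugging these into the proof of Theorem~\ref{thm:main} with $d$ replaced by $d+1$ and $\lambda = \sigma^2/(C_w^2 + F^2)$ reproduces exactly the $\beta_t$ of eq.~\eqref{eq:beta_t_2} and yields the $\sqrt{T}$-tail of the claimed bound. The leading additive constant $F + c^*$ absorbs the first-round regret $r_0 \leq F$ together with the extra slack incurred by passing through $\tilde{f}$ rather than $f$ when comparing against $f_0$.

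The main obstacle I anticipate is confirming that Assumption~\ref{ass:rho} remains sufficient after the dimension lift. The induction inside Lemma~\ref{lem:w_t} requires the coefficient of $\beta_{t-1}$ in eq.~\eqref{eq:known_rho} to stay below $1/2$, and in the augmented problem this coefficient involves $(d+1)^2$ as well as a mildly larger logarithmic factor coming from $\|\tilde{w}_*\|_2 \leq \sqrt{C_w^2+F^2}$. I expect the slack already built into the $1/(8d)$ constant of Assumption~\ref{ass:rho} to absorb this inflation for any $d \geq 1$, but it should be checked explicitly rather than taken for granted; this is the only place where the reduction is not fully mechanical.
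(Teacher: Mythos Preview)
Your proposal is correct and follows essentially the same route as the paper: both lift to the $(d+1)$-dimensional homogenized problem, observe that $\tilde f(x)=w_*^\top x+c^*$ satisfies the \emph{standard} $\rho$-GAM condition (so Lemmas~\ref{lem:delta}--\ref{lem:gap} and Lemma~\ref{lem:w_t} carry over verbatim), and then read off the bound with $d\to d+1$ and the inflated norm constants. Your explicit flag that Assumption~\ref{ass:rho} must still close the induction in eq.~\eqref{eq:known_rho} after the dimension lift is more careful than the paper, which simply writes ``the rest of the analysis follows the analysis of Theorem~\ref{thm:main}'' without revisiting that constant.
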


\begin{remark}
The result again shows that LinUCBw algorithm achieves $\tilde{O}(\sqrt{T})$ cumulative regret and thus it is also a no-regret algorithm under the weaker condition (Definition \ref{def:lm_weak}). Note Definition \ref{def:lm_weak} is quite weak which even doesn't require the true function sits within the approximation function class.
\end{remark}

\begin{proof}

The analysis is similar to the $\rho$-gap-adjusted case but includes $c^*=f^*-f^*_w$. For instance, let $\Delta^w_t$ denote the deviation term of our linear function from the true function at $x_t$, then
\begin{align*}
\Delta^w_t = f_0(x_t) - w^\top_* x_t-c^*,
\end{align*}
And our observation model (eq. \eqref{eq:obs}) becomes
\begin{align*}
y_t = f_0(x_t) + \eta_t = w_*^\top x_t + c^* + \Delta^w_t + \eta_t.
\end{align*}
Then similar to Lemma~\ref{lem:delta}, we have the following lemma, whose proof is nearly identical to Lemma~\ref{lem:delta}.
\begin{lemma}[Bound of deviation term]
$\forall t \in \{0,1,\ldots,T-1\}$,
\begin{align*}
|\Delta_t | \leq \frac{\rho}{1-\rho} w^\top_*(x_* - x_t).
\end{align*}
\end{lemma}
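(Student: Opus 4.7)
The plan is to mirror the proof of Lemma~\ref{lem:delta} almost verbatim, with two small bookkeeping changes needed to absorb the additive offset $c^* = f^* - f_w^*$ and to use the weak condition (Definition~\ref{def:lm_weak}) in place of Definition~\ref{def:lm}. The key observation is that, because $f(x)=w_*^\top x$ is linear, the numerator in Definition~\ref{def:lm_weak} evaluated at $x_t$ collapses to $-\Delta^w_t$, so the weak condition immediately yields a sup-norm bound on $|\Delta^w_t|$ in terms of the suboptimality gap $f^*-f_0(x_t)$.

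First I would compute, using $f(x_t) = w_*^\top x_t$ and $c^* = f^* - f_w^*$,
\[
f(x_t) - f_w^* + f^* - f_0(x_t) \;=\; w_*^\top x_t + c^* - f_0(x_t) \;=\; -\Delta^w_t,
\]
so Definition~\ref{def:lm_weak} gives $|\Delta^w_t|\leq \rho\bigl(f^*-f_0(x_t)\bigr)$. Next I would invoke Proposition~\ref{prop_weka_rho} (preservation of maximizers) to write $f(x_*) = w_*^\top x_* = f_w^*$, which combined with $c^*=f^*-f_w^*$ gives $f^* = w_*^\top x_* + c^*$. Substituting this identity together with $f_0(x_t) = w_*^\top x_t + c^* + \Delta^w_t$ yields
\[
f^* - f_0(x_t) \;=\; w_*^\top(x_* - x_t) - \Delta^w_t,
\]
and hence
\[
-\rho\bigl(w_*^\top(x_* - x_t)-\Delta^w_t\bigr) \;\leq\; \Delta^w_t \;\leq\; \rho\bigl(w_*^\top(x_* - x_t)-\Delta^w_t\bigr).
\]

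From here the algebra is identical to Lemma~\ref{lem:delta}: rearranging the left inequality gives $-\Delta^w_t \leq \tfrac{\rho}{1-\rho}\,w_*^\top(x_*-x_t)$, and rearranging the right inequality gives $\Delta^w_t \leq \tfrac{\rho}{1+\rho}\,w_*^\top(x_*-x_t)$. Taking the maximum of the two bounds in absolute value produces the claimed $|\Delta^w_t|\leq \tfrac{\rho}{1-\rho}\,w_*^\top(x_*-x_t)$.

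I do not expect any genuine obstacle; the only thing to check is that the bound on the right-hand side is nonnegative so that the case split and absolute-value manipulations make sense, which is immediate from $x_*\in\argmax_x w_*^\top x$ (a consequence of Proposition~\ref{prop_weka_rho} applied to the linear $f$). Thus the proof is essentially a one-line substitution into the original argument, with $c^*$ cancelling out on both sides whenever it appears.
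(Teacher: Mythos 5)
Your proof is correct and takes essentially the same route the paper intends: the paper's own proof of this lemma is just a deferral to Lemma~\ref{lem:delta}, and your argument is exactly that proof with the offset $c^*$ threaded through. You also correctly supply the two details the adaptation actually requires --- that the numerator in Definition~\ref{def:lm_weak} evaluated at $x_t$ collapses to $-\Delta^w_t$, and that $f^* = w_*^\top x_* + c^*$ via Proposition~\ref{prop_weka_rho}, so that $c^*$ cancels in the gap and the remaining algebra is verbatim that of Lemma~\ref{lem:delta}.
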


We also provide the following lemma, which is the counterpart of Lemma~\ref{lem:gap}.

\begin{lemma}
Define $u_t = \left \|\begin{bmatrix}x_t\\1\end{bmatrix} \right\|_{\Sigma_t^{-1}}$ and assume $\beta_t$ is chosen such that $w_*\in \mathrm{Ball}_t$.
Then
\begin{align*}
w_*^\top (x_* - x_t) \leq 2 \sqrt{\beta_t} u_t.
\end{align*}
\end{lemma}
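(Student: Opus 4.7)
The plan is to mirror the proof of Lemma~\ref{lem:gap} verbatim, but with the homogenized features $\begin{bmatrix}x\\1\end{bmatrix}$ playing the role of $x$ and the augmented parameter $\begin{bmatrix}w_*\\c^*\end{bmatrix}$ (with $c^*=f^*-f_w^*$) playing the role of $w_*$. Note first that the hypothesis ``$w_*\in\mathrm{Ball}_t$'' should be read in Algorithm~\ref{alg:linucb2} as the $(d{+}1)$-dimensional statement $\begin{bmatrix}w_*\\c^*\end{bmatrix}\in\mathrm{Ball}_t$, and that Proposition~\ref{prop_weka_rho} guarantees $x_*\in\argmax_x\bigl(w_*^\top x+c^*\bigr)$, so the optimism comparison has something meaningful to latch onto at the shifted linear function.

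The first step is to let $\begin{bmatrix}\tilde{w}\\\tilde{c}\end{bmatrix}$ denote a maximizer of the inner problem $\max_{[w^\top,c]\in\mathrm{Ball}_t}[w^\top,c]\begin{bmatrix}x_t\\1\end{bmatrix}$, so that by the outer maximization step of Algorithm~\ref{alg:linucb2} and the feasibility of $\begin{bmatrix}w_*\\c^*\end{bmatrix}$ in $\mathrm{Ball}_t$, we get the optimism inequality
\[
w_*^\top x_*+c^*\;\leq\;\tilde w^\top x_t+\tilde c.
\]
The second step is the algebraic rearrangement: subtracting $w_*^\top x_t$ from both sides and collecting terms, the offset $c^*$ survives only as a difference $\tilde c-c^*$, yielding
\[
w_*^\top(x_*-x_t)\;\leq\;\Bigl(\begin{bmatrix}\tilde w\\\tilde c\end{bmatrix}-\begin{bmatrix}w_*\\c^*\end{bmatrix}\Bigr)^{\!\top}\!\begin{bmatrix}x_t\\1\end{bmatrix}.
\]
Crucially, the left-hand side is exactly the quantity we want to bound, with no residual $c^*$.

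The third step is the standard $\Sigma_t$-norm/dual-norm argument: insert $\pm\begin{bmatrix}\hat w_t\\\hat c_t\end{bmatrix}$ inside the inner product, apply the triangle inequality, then Hölder's inequality in the $(\Sigma_t^{1/2},\Sigma_t^{-1/2})$ pair, to obtain
\[
w_*^\top(x_*-x_t)\;\leq\;\Bigl(\bigl\|\tfrac{\tilde w-\hat w_t}{\tilde c-\hat c_t}\bigr\|_{\Sigma_t}+\bigl\|\tfrac{\hat w_t-w_*}{\hat c_t-c^*}\bigr\|_{\Sigma_t}\Bigr)\cdot u_t\;\leq\;2\sqrt{\beta_t}\,u_t,
\]
where the last inequality uses that both $\begin{bmatrix}\tilde w\\\tilde c\end{bmatrix}$ and $\begin{bmatrix}w_*\\c^*\end{bmatrix}$ lie in $\mathrm{Ball}_t$ by hypothesis. (The fraction notation above is just a shorthand for stacked vectors.) This concludes the proof.

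The main obstacle is purely bookkeeping: one has to verify that the intercept $c^*$ cancels out in the optimism step so that the stated bound still controls $w_*^\top(x_*-x_t)$ rather than $w_*^\top(x_*-x_t)$ plus some offset; once that is observed, everything else is a dimension-$(d{+}1)$ transcription of Lemma~\ref{lem:gap}. The only substantive issue deferred elsewhere is the confidence analysis establishing the feasibility hypothesis $\begin{bmatrix}w_*\\c^*\end{bmatrix}\in\mathrm{Ball}_t$ with the enlarged $\beta_t$ from \eqref{eq:beta_t_2}, which is the analogue of Lemma~\ref{lem:w_t} and is not needed here.
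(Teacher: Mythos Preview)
Your proposal is correct and follows essentially the same route as the paper's own proof: use optimality of $x_t$ together with feasibility of $\begin{bmatrix}w_*\\c^*\end{bmatrix}\in\mathrm{Ball}_t$ to obtain the optimism inequality, cancel the offset $c^*$, insert $\pm\begin{bmatrix}\hat w_t\\\hat c_t\end{bmatrix}$, and finish with H\"older in the $\Sigma_t$-norm. The reference to Proposition~\ref{prop_weka_rho} is actually unnecessary (optimism only needs feasibility of the true augmented parameter, not that $x_*$ maximizes $w_*^\top x+c^*$), but it does no harm.
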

\begin{proof}
Let $\tilde{w},\tilde{c}$ denote the parameter that achieves $\argmax_{w,c \in \mathrm{Ball}_t} w^\top x_t+c$, by the optimality of $x_t$, 
\begin{align*}
w_*^\top x_* - w^\top_* x_t &=\begin{bmatrix}w_*^\top,c^*\end{bmatrix} \begin{bmatrix}x_*\\1\end{bmatrix}-\begin{bmatrix}w_*^\top,c^*\end{bmatrix} \begin{bmatrix}x_t\\1\end{bmatrix}\\
&\leq \begin{bmatrix}\tilde{w}^\top,\tilde{c}\end{bmatrix} \begin{bmatrix}x_t\\1\end{bmatrix} - \begin{bmatrix}w_*^\top,c^*\end{bmatrix} \begin{bmatrix}x_t\\1\end{bmatrix}\\
&= (\begin{bmatrix}\tilde{w}^\top,\tilde{c}\end{bmatrix} - \begin{bmatrix}\hat{w}_t^\top,\hat{c}_t\end{bmatrix}+\begin{bmatrix}\hat{w}_t^\top,\hat{c}_t\end{bmatrix}-\begin{bmatrix}w_*^\top,c^*\end{bmatrix}) \begin{bmatrix}x_t\\1\end{bmatrix}\\
&\leq \left \|\begin{bmatrix}\tilde{w}^\top,\tilde{c}\end{bmatrix} - \begin{bmatrix}\hat{w}_t^\top,\hat{c}_t\end{bmatrix}\right\|_{\Sigma_t} \left \|\begin{bmatrix}x_t\\1\end{bmatrix} \right \|_{\Sigma^{-1}_t} + \left \|\begin{bmatrix}\hat{w}_t^\top,\hat{c}_t\end{bmatrix}-\begin{bmatrix}w_*^\top,c^*\end{bmatrix}\right \|_{\Sigma_t} \left \|\begin{bmatrix}x_t\\1\end{bmatrix}\right \|_{\Sigma^{-1}_t}\\
&\leq 2\sqrt{\beta_t} u_t
\end{align*}
where the second inequality applies Holder's inequality; the last line uses the definition of $\mathrm{Ball}_t$ (note that both $\begin{bmatrix}\tilde{w}^\top,\tilde{c}\end{bmatrix},\begin{bmatrix}w_*^\top,c^*\end{bmatrix}\in \mathrm{Ball}_t).$
\end{proof} 

The rest of the analysis follows the analysis of Theorem~\ref{thm:main}.
\end{proof}

\section{Simulation}\label{sec:simulation}

In this section, we run a simulation on a $1$-dimensional test function shown in Figure \ref{fig:exp1}. Here we run the first $10$ iterations with uniform sampling and the remaining $100$ iterations are using LinUCB algorithm. In Figure \ref{fig:exp2} we can see that cumulative regret is increasing with uniform sampling but it doesn't increase when running LinUCB. The reason behind it is that under the gap-adjusted misspecification, LinUCB is able to quickly find the optimal point $x_*=0$.

\begin{figure*}[!htbp]
	\centering 
	\subfigure[$1$-dimensional test function.]{\label{fig:exp1}\includegraphics[width=0.45\linewidth]{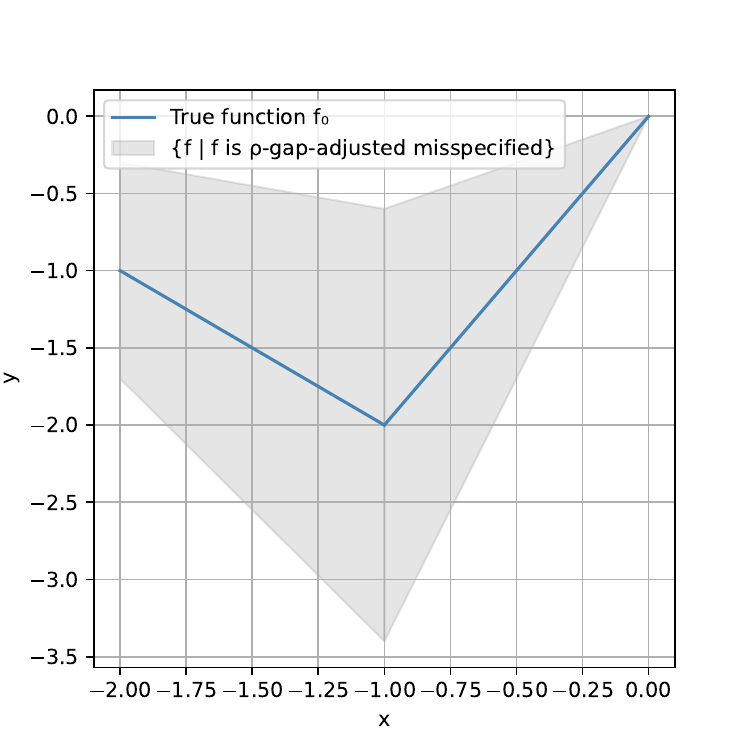}}
	\subfigure[Cumulative regret]{\label{fig:exp2}\includegraphics[width=0.45\linewidth]{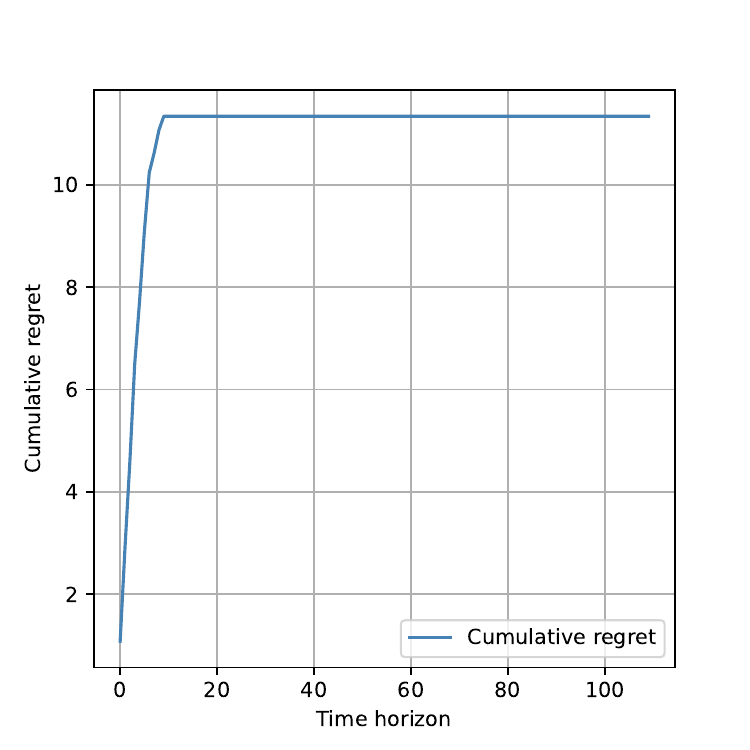}}
 \caption{Simulation function and result.}
\end{figure*}

\end{document}